\documentclass[11pt]{article}

\usepackage[T1]{fontenc}
\usepackage{amssymb}
\usepackage{etoolbox}
\PassOptionsToPackage{table}{xcolor}

\usepackage{notation}
\usepackage{arxiv}

\newcommand{\bema}{\bm{\mathsf{ema}}}

\usepackage{enumitem}
\usepackage{microtype}
\usepackage{float}
\usepackage{tikz}
\usepackage[noEnd=true,indLines=true]{algpseudocodex}

\floatstyle{ruled}
\newfloat{algorithm}{tbp}{loa}
\floatname{algorithm}{Algorithm}
\floatstyle{plain}
\crefname{algorithm}{algorithm}{algorithms}
\Crefname{algorithm}{Algorithm}{Algorithms}
\crefname{lemmainner}{lemma}{lemmas}
\Crefname{lemmainner}{Lemma}{Lemmas}
\crefname{propositioninner}{proposition}{propositions}
\Crefname{propositioninner}{Proposition}{Propositions}

\allowdisplaybreaks
\setlist{leftmargin=*,itemsep=0.25em,topsep=0.4em}

\AtBeginDocument{%
}

\title{Constant Individual Regret in General Games}
\author{Mingyang Liu$^1$, Gabriele Farina$^1$, Asuman Ozdaglar$^1$\\
    $^1$ LIDS, EECS, Massachusetts Institute of Technology\\
    $^1$ \texttt{\{liumy19,gfarina,asuman\}@mit.edu}
}
\date{}

\begin{document}
\maketitle

\begin{abstract}
Uncoupled no-regret dynamics provide a decentralized route to equilibrium, but prior guarantees for individual regret retain a polylogarithmic dependence on the horizon. We remove this dependence for every finite $N$-player normal-form game under full-information feedback. We introduce \emph{ECHO-OFTRL}: optimistic follow-the-regularized-leader (OFTRL) equipped with an EMA cascade for high-order optimism (ECHO), where EMA denotes exponential moving average. The algorithm is deterministic and fully uncoupled. If $m_{\max}$ denotes the largest action-set size, then, simultaneously for every horizon $T\geq1$, it guarantees that each of the $N$ players in the game incurs regret upper bounded by $O(\textrm{poly}(N, \log m_{\max}))$. Our algorithm leverages a new form of optimism inspired by modern filter design.
\end{abstract}

\section{Introduction}

Uncoupled no-regret dynamics model decentralized learning: each player updates
from its own full payoff vector without knowing the other players' objectives
\citep{hart2000simple-regret-matching-RM,greenwald2003general-phi-regret}.
If every player has small external regret, empirical play approaches the set
of coarse correlated equilibria (CCE)
\citep{moulin1978strategically-CCE-def,cesa2006prediction-Hedge}. Regret-minimization methods also underlie several landmark systems for solving
large imperfect-information games
\citep{bowling2015heads-texas-limit,moravvcik2017deepstack,
brown2018superhuman-libratus,brown2019superhuman-pluribus}.

Traditional online-learning analysis treats the payoff sequence as an
arbitrary, possibly adversarial, exogenous process.  Against such a sequence, $\sqrt T$ regret can be easily shown to be unavoidable
\citep{hazan2016introduction}.  In self-play,
however, the payoff sequences
are coupled through a fixed game and the learners' updates. While nonstationary, the smooth update of strategies makes the dynamic landscape faced by each player in the game slightly predictable. Optimistic
methods seek to exploit this predictability to accelerate learning and convergence to equilibrium.
Optimism has also been studied more broadly as a route to acceleration in
optimization and game solving
\citep{NEURIPS2018_e06f967f-optimism-related-work-2,
DBLP:conf/iclr/MertikopoulosLZ19-optMD-new,
farina2019stable-predictive-CFR,piliouras2022fast}.
Fast rates under robustness and nonparametric structure have also been studied
\citep{foster2016learning,daskalakis2022fast}.

The quest for accelerated learning dynamics has driven a long progression of faster rates.
Predictable-sequence methods brought optimism to online learning
\citep{chiang2012online-optimism-related-work-1,
DBLP:conf/nips/RakhlinS13-OOMD}.  The RVU framework
\citep{syrgkanis2015fast-RVU} then gave $T^{1/4}$ individual regret and, more crucially, exposed a key mechanism of analysis that has since been a key mainstay. Unfortunately, that analysis alone is capable of guaranteeing constant \emph{sum} of regrets, but not constant individual regret, as negative regrets can hide another
player's large positive regret.

Later, 
\citet*{chen2020hedging} leveraged the RVU bound, but was able to break through the $T^{1/4}$ barrier in the special case of two-player games. Breaking past the polynomial dependence in $T$, the high-order, discrete-time frequency analysis of Optimistic Hedge of
\citet*{daskalakis2021near-optimal} reduced
individual regret to $O(\log^4T)$.  This approach was subsequently extended to
certain structured polyhedral games \citep{farina2022kernelized}.  Lifted OFTRL then obtained
$O(\log T)$ regret over general convex games by turning ordinary regret into
a nonnegative lifted regret \citep{farina2022near}.  Dynamic learning-rate
control matched this logarithmic horizon dependence while improving the
action dependence to polylogarithmic \citep{soleymani2025faster}. This mechanism proved extremely general and robust across different choices of the convex action set and regularizer \citep{soleymani2025cautious}.  Related work also obtains logarithmic swap
regret in multiplayer games \citep{anagnostides2022uncoupled}.
Finally, Clairvoyant MWU \citep{piliouras2022beyond} obtained a constant regret bound from a fixed-point update that evaluates payoff vectors at the strategy profile being computed. This guarantee is not directly comparable to those above: the
method requires unrolling joint fixed-point computations, and its regret bound
applies only to a sparse subsequence rather than the full history of play.

We summarize the comparable results in \cref{tab:individual-regret-comparison}, where $N$ denotes the number of players and
$m\coloneqq m_{\max}$ denotes the largest number of actions of any player.  All bounds
are specialized to utilities in $[0,1]$; the OFTRL/OOMD row uses entropy
regularization. This history points to a natural question:
\begin{quote}
    \itshape Can uncoupled
regularized learning keep every player's positive regret uniformly bounded in time?
\end{quote}
We show that it can for full-information self-play in finite normal-form games, resolving the horizon-dependence question in this setting. We also remark that because our dynamics are based on a (lifted) version of Hedge, it is possible to kernelize them, making our method applicable beyond normal-form games to a number of important combinatorial settings, including extensive-form games \citep{farina2022kernelized}.

\begin{table}[t]%
\centering%
\caption{Progress toward horizon-independent individual regret via uncoupled
learning dynamics. %
}%
\label{tab:individual-regret-comparison}%
\small%
\setlength{\tabcolsep}{5pt}%
\renewcommand{\arraystretch}{1.15}%
\rowcolors{2}{black!6}{white}%
\begin{tabular}{
>{\raggedright\arraybackslash}p{0.31\linewidth}
>{\raggedright\arraybackslash}p{0.22\linewidth}
>{\raggedright\arraybackslash}p{0.40\linewidth}}
\toprule
Method & Individual regret & Main technique(s) \\
\midrule
OFTRL/OOMD\newline \citep{syrgkanis2015fast-RVU}
& $O\bigl(\sqrt N\log m\,T^{1/4}\bigr)$
& One-step optimism and RVU bounds \\
Optimistic Hedge\newline \citep{chen2020hedging}
& $O\bigl(\log^{5/6}m\,T^{1/6}\bigr)$ \newline (Two-players only)
& Coupled strategy/payoff path-length bounds \\
Optimistic Hedge\newline \citep{daskalakis2021near-optimal}
& $O\bigl(N\log m\log^4T\bigr)$
& Analysis leverages high-order discrete differences \\
LRL-OFTRL\newline \citep{farina2022near}
& $O(Nm\log T)$
& Lifting, nonnegative RVU bounds, and log regularization \\
DLRC-OMWU / Cautious Optimism\newline
\citep{soleymani2025faster,soleymani2025cautious}
& $O\bigl(N\log^2m\log T\bigr)$
& Dynamic pacing, intrinsic Lipschitzness \\
\textbf{ECHO-OFTRL} \newline (This paper)
& $O\bigl(N^{21}\log^4m\bigr)$
& Transfer-function-designed EMA residual and nonnegative RVU bound \\
\bottomrule
\end{tabular}
\vspace{4mm}
\end{table}

\subsection{Contributions}

The main contribution of the paper is to show the following.

\begin{theorem}[informal main theorem]
For every $N$-player finite normal-form game with utilities in $[0,1]$ and full
payoff-vector feedback, there exist deterministic, uncoupled learning dynamics
such that, when used by all players, the individual regret of the generic player $i$ satisfies
\begin{align*}
 \Reg_i(T) \le \sum_{j=1}^N[\Reg_j(T)]_+
 \leq C(N+1)^{21}\rbr{1+\log(m_{\max}+1)}^4
\end{align*}
 simultaneously for every horizon $T\geq1$,
where $C$ is a universal constant.
\end{theorem}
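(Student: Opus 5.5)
The proof combines lifting with a carefully designed high-order predictor. Each player runs OFTRL on a \emph{lifted} simplex $\{(\lambda,y):\lambda\in[0,1],\,y\in\lambda\Delta^{m_i}\}$, in the spirit of \citet{farina2022near}, with a log-barrier on $\lambda$ plus entropy on the normalized strategy. The lifted utility is $(\langle u_i^t,x\rangle-\text{baseline},\,u_i^t)$, chosen so that the lifted regret against any comparator upper bounds $[\Reg_i(T)]_+$ up to an additive $O(\log m_i)$ term. The first inequality $\Reg_i(T)\le\sum_j[\Reg_j(T)]_+$ is then trivial.

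The real work is to bound $\sum_j[\Reg_j(T)]_+$. I would establish a \emph{nonnegative RVU} bound for each player. The standard OFTRL analysis gives
\[
[\Reg_j(T)]_+ \lesssim \frac{\log m}{\eta} + \eta\sum_t\|u_j^t-M_j^t\|_*^2 - \frac{1}{c\eta}\sum_t\|x_j^{t}-x_j^{t-1}\|^2,
\]
where $M_j^t$ is the prediction. Since the left side is nonnegative, summing over players makes the negative stability terms usable.

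The prediction replaces one-step optimism $M^t=u^{t-1}$ with an EMA cascade (ECHO). A fixed linear filter $M^t=\sum_k a_k u^{t-k}$ is built from a bounded number $K=O(\mathrm{poly}\log m\cdot N)$ of EMAs. Its transfer function $H(z)$ is designed so that the residual $1-zH(z)$ has a zero of high order $K$ at $z=1$, while its total coefficient mass stays polynomially bounded. The prediction error $u^t-M^t$ is then a $K$-th order ``smoothed difference'' of the payoff sequence.

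Following \citet{daskalakis2021near-optimal}, I would show that the payoff sequence has bounded high-order differences. The argument goes by induction on order: the strategy of each player is a smooth (entropy-softmax) function of cumulative utilities, and $u_i^t$ is multilinear in the others' strategies. High-order differences of $u$ are therefore bounded by products of lower-order differences of strategies, with combinatorial constants polynomial in $N$. This yields
\[
\sum_t\|u_j^t-M_j^t\|_*^2\le \epsilon\sum_t\sum_k\|x_k^t-x_k^{t-1}\|^2+\text{const},
\]
with $\epsilon$ small enough that the negative stability terms absorb the variation. The additive constant must come from the filter's transient start-up, bounded by the filter's coefficient mass rather than by $T$.

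The main obstacle is exactly this step: making the high-order residual bound horizon-free. The log factors in \citet{daskalakis2021near-optimal} come from choosing the difference order $\sim\log T$ and from boundary terms. I would try to avoid these by letting the EMA decay rates be fixed constants independent of $T$, and by proving the bound in a weighted $\ell_2$ (frequency-domain) sense. Using Parseval, $|1-zH(z)|^2$ is small at low frequencies, where the slowly varying dynamics concentrate, and merely bounded at high frequencies, which the stability terms already penalize.

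Balancing $\eta$, the filter order, and the polynomial blow-up in $N$ and $\log m$ then gives $C(N+1)^{21}(1+\log(m_{\max}+1))^4$ uniformly in $T$. This holds for every horizon because nothing in the algorithm depends on $T$.
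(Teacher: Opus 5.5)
There is a genuine gap. It lies in the RVU step, not where you place the main obstacle.

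A logarithmic barrier in $\lambda$ blows up at $\widehat\lambda=0$. That is precisely the comparator that makes the left-hand side nonnegative when $\Reg_j(T)<0$. Shrinking it to $\widehat\lambda\approx1/T$ reintroduces a $\log T$ range term; this is essentially why \citet{farina2022near} obtain $O(Nm\log T)$. So the claimed additive $O(\log m_i)$ is not available.

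Conversely, your displayed RVU cannot be what you get. It has an unweighted error term, an unweighted penalty $-\frac1{c\eta}\sum_t\nbr{\bx_j^{(t)}-\bx_j^{(t-1)}}^2$, and $O(\log m)$ range, and such a bound would make any high-order predictor superfluous. With plain one-step optimism, $\nbr{\bu_j^{(t)}-\bu_j^{(t-1)}}_\infty\le\sum_k\nbr{\bx_k^{(t)}-\bx_k^{(t-1)}}_1$. Summing over $j$ and taking $\eta\lesssim1/N$ would then already give a horizon-free bound. Your ``$\epsilon$-small'' variation estimate is therefore trivial inside your own framework, which signals that the framework misplaces the difficulty.

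In the paper, horizon independence comes from a bounded-range lift, $\widetilde\psi_m(\lambda,\by)=-\sqrt{1-\lambda}+\sqrt\lambda[\psi_m(\by/\lambda)-2\Gamma_m-3]$. Its curvature degenerates as $\lambda\to0$. The RVU weights player $i$'s error by $(\lambda_i^{(t)})^{3/2}$. Player $j$ pays for its movement only through $\sqrt{\lambda_j^{(t)}}\nbr{(\mathsf D\bx_j)^{(t)}}_1^2\lesssim\Xi_jD_{\widetilde\psi_j}(\bmu_j^{(t)},\bmu_j^{(t-1)})$.

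A player with small $\lambda_j$ moves cheaply yet perturbs everyone's payoffs. So the terms $\sum_t(\lambda_r^{(t)})^{3/2}\nbr{(\delta^N\bx_j)^{(t)}}_1^2$ with $j\neq r$ are controlled by no one's penalty. This cross-player weight mismatch is the real obstacle. Neither bounds on $h$-th differences in the style of \citet{daskalakis2021near-optimal} nor a Parseval low/high-frequency split addresses it; Parseval ignores both the time-varying weights and the nonlinearity.

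The paper resolves it with an order-reduction recursion:
\begin{itemize}
\item The score increment is $\eta(I-\mathsf D\delta^N)\bg_j$, and $\mathsf K_h(I-\mathsf D\delta^N)=\mathsf K_\star\delta^{h-1}$.
\item This gives $\delta^h\bx_j=\overline{\bJ}_j\,\mathsf K_\star\delta^{h-1}\bg_j+\bc_{j,h}$, with $\nbr{\overline{\bJ}_j^{(t)}}_{\infty\to1}\lesssim\eta\Xi_{\max}\sqrt{\lambda_j^{(t)}}$ and a commutator residual bounded via Jacobian variation.
\item Each step trades one order of filtered difference for a factor $\lambda_j$ in the weight $w_{r,S}=\lambda_r^{3/2}\prod_{k\in S}\lambda_k$.
\item Each step adds a new player to $S$. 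After at most $N-1$ steps every index is covered and the direct path bound applies.
\end{itemize}
This is why the filter order is exactly $N$, independent of $T$ and $m$.

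The recursion also needs two further ingredients. One is multiplicative stability of $\lambda_i^{(t)}$, so that slowly varying weights can be passed through the filters. The other is kernel bounds polynomial in $N$, which come from the EMA preconditioning $\rho=1-1/N$.

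None of this appears in your plan. The exponent $21$ is also asserted rather than derived.
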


The player dependence is explicitly polynomial and the bound is independent
of $T$; we made no attempt to optimize either displayed exponent.

Technique-wise, our main methodological contribution is a signal-processing construction of a new,
stable high-order predictor.  Standard optimistic learning uses the preceding
payoff vector to predict the current one, so the prediction error is the
one-step change in payoffs.  Repeating this idea $N$
times would give the raw difference $(I-\mathsf L)^N$, where $\mathsf L$ is
the one-round delay: for a payoff sequence $\bg$ and every integer $t$,
$\rbr{\mathsf L\bg}^{(t)}=\bg^{(t-1)}$.  The coefficients of the raw difference
have total absolute value $2^N$, so this direct construction can amplify
oscillations exponentially.  We instead set
\begin{align*}
 \rho\coloneqq&1-\frac1N,
 \qquad
 \delta\coloneqq\frac{I-\mathsf L}{I-\rho\mathsf L},\\
 \widehat\bg
 \coloneqq&(I-\delta^N)\bg,
 \qquad
 \bg-\widehat\bg
 =\delta^N\bg.
\end{align*}
The filter
$I-\delta=(1-\rho)\mathsf L(I-\rho\mathsf L)^{-1}$ is a one-pole EMA of past
values.  This preconditioning reduces the worst-case frequency gain of the
$N$-fold difference from $2^N$ to $(2/(1+\rho))^N\leq e$, while the
time-domain filter bounds used in the proof remain polynomial in $N$.

In short, high-order optimism lets us
bound the cumulative prediction-error term by a horizon-independent constant
plus strategy movement; the negative movement term absorbs the latter,
yielding constant regret.

\section{Preliminaries}
\label{sec:preliminaries}

For every positive integer $n$, let $[n]\coloneqq\cbr{1,\ldots,n}$.  For a vector
$\bz\in\RR^n$ and $a\in[n]$, write $z(a)$ for its $a^{\rm th}$ coordinate.  For
$1\leq p<\infty$, define
$\nbr{\bz}_p\coloneqq\rbr{\sum_{a=1}^n\abr{z(a)}^p}^{1/p}$, and define
$\nbr{\bz}_\infty\coloneqq\max_{a\in[n]}\abr{z(a)}$.  For a scalar $r\in\RR$, define
$[r]_+\coloneqq\max\cbr{r,0}$.  The $(n-1)$-dimensional probability simplex is
\begin{align*}
 \Delta^n\coloneqq\cbr{\bx\in[0,1]^n:\sum_{a=1}^nx(a)=1}.
\end{align*}
We use $\inner{\cdot}{\cdot}$ for the standard inner product and $\one$ for the
all-ones vector of the appropriate dimension.
For a tuple $\bx\coloneqq(\bx_i)_i$, write $\bx_{-i}\coloneqq(\bx_j)_{j\neq i}$.  Superscripts in
parentheses index time, so $\bz^{(t)}$ denotes the value of a sequence $\bz$
at round $t$.
For a differentiable map $\Phi:E\to F$ between finite-dimensional Euclidean
spaces, $\bx\in E$, and $\bxi\in E$, $\bJ_\Phi(\bx):E\to F$ denotes its
Jacobian at $\bx$, and
$\bJ_\Phi(\bx)[\bxi]\coloneqq
\left.\frac{\ud}{\ud r}\Phi(\bx+r\bxi)\right|_{r=0}$ is its directional derivative
along $\bxi$.  If $E=\prod_{j=1}^kE_j$, then, for $j\in[k]$, write
$\partial_j\Phi$ for the derivative with respect to the $j^{\rm th}$ block.  For
scalar-valued $f$, $\nabla f$ denotes its gradient, so
$\bJ_f(\bx)[\bxi]=\inner{\nabla f(\bx)}{\bxi}$, and $\nabla^2f$ denotes its Hessian.

\subsection{Finite normal-form games and regret}

There are $N\geq2$ players.  Player $i$ has action set $[m_i]$, where
$m_i\geq2$, and chooses a mixed strategy $\bx_i\in\Delta^{m_i}$.  Define the
largest action-set size by
\begin{align*}
 m_{\max}\coloneqq\max_{i\in[N]}m_i.
\end{align*}
Player $i$'s utility function
$\cU_i\colon\prod_{j=1}^N\Delta^{m_j}\to[0,1]$ is the multilinear extension of a
pure-profile payoff function taking values in $[0,1]$.  Define the expected
payoff vector by
\begin{align*}
\cU_i(\bx_{-i})
 \coloneqq\nabla_{\bx_i}\cU_i(\bx)
 \in[0,1]^{m_i}.
\end{align*}
It is multiaffine in the opponents' strategies and does not depend on
$\bx_i$.

For a horizon $T\geq1$, at every round $t\in[T]$, the players choose
$\bx^{(t)}\in\prod_{j=1}^N\Delta^{m_j}$ simultaneously and player
$i$ observes the full vector
$\bu_i^{(t)}\coloneqq\cU_i(\bx_{-i}^{(t)})$.  Its external
regret is
\begin{align*}
 \Reg_i(T)
 \coloneqq&\max_{\bx_i\in\Delta^{m_i}}\sum_{t=1}^T
 \sbr{\cU_i(\bx_i,\bx_{-i}^{(t)})-\cU_i(\bx^{(t)})}
 \\
 =&\max_{\bx_i\in\Delta^{m_i}}\sum_{t=1}^T
 \inner{\bu_i^{(t)}}{\bx_i-\bx_i^{(t)}}.
\end{align*}
The equality follows from multilinearity in player $i$'s strategy.

\section{Lifted optimistic dynamics}
\label{sec:finite-algorithm}

\subsection{Lifted regret}

Following \citet{farina2022near}, for every player $i\in[N]$, define
\begin{align*}
 \widetilde\Delta_i
 \coloneqq\cbr{\rbr{\lambda,\by}:0\leq\lambda\leq1,
          \ \by\in\lambda\Delta^{m_i}}.
\end{align*}
For every $i\in[N]$ and $(\lambda,\by)\in\widetilde\Delta_i$ with
$\lambda>0$, the point $(\lambda,\by)$ plays
$\bx\coloneqq\by/\lambda$.
For every player $i\in[N]$, define $\mathcal G_i$ by
\begin{align*}
 \mathcal G_i(\bx)
 \coloneqq
 \cU_i(\bx_{-i})
 -\inner{\cU_i(\bx_{-i})}{\bx_i}\one.
\end{align*}
For every player $i\in[N]$, define
$\bg_i=\rbr{\bg_i^{(s)}}_{s\in\ZZ}$ by
\begin{align*}
 \bg_i^{(s)}
 \coloneqq
 \begin{cases}
  \mathcal G_i(\bx^{(s)}),&s\geq1,\\
  \boldsymbol 0,&s\leq0,
 \end{cases}
 \qquad s\in\ZZ.
\end{align*}
For every $i\in[N]$ and integer $s\geq1$,
\begin{align*}
 \bg_i^{(s)}
 =&\bu_i^{(s)}-\inner{\bu_i^{(s)}}{\bx_i^{(s)}}\one,
 \qquad
 \nbr{\bg_i^{(s)}}_\infty\leq1,\\
 \inner{\bg_i^{(s)}}{\bx_i^{(s)}}=&0.
\end{align*}

For every $i\in[N]$ and round $t\geq1$, write
$\bmu_i^{(t)}=\rbr{\lambda_i^{(t)},
\lambda_i^{(t)}\bx_i^{(t)}}\in\widetilde\Delta_i$
for player $i$'s lifted strategy at round $t$.
For every $i\in[N]$ and $T\geq1$, write each comparator in
$\widetilde\Delta_i$ as
$\rbr{\widehat\lambda_i,\widehat\lambda_i\widehat\bx_i}$, where
$\widehat\lambda_i\in[0,1]$ and $\widehat\bx_i\in\Delta^{m_i}$.
Since $\inner{\bg_i^{(t)}}{\lambda_i^{(t)}\bx_i^{(t)}}=\lambda_i^{(t)}\inner{\bg_i^{(t)}}{\bx_i^{(t)}}=0$ for every
$t\geq1$,
\begin{align}
 \max_{\rbr{\widehat\lambda_i,
 \widehat\lambda_i\widehat\bx_i}\in\widetilde\Delta_i}
 \sum_{t=1}^T
 \inner{\bg_i^{(t)}}{\widehat\lambda_i\widehat\bx_i-
 \lambda_i^{(t)}\bx_i^{(t)}}
 =\max_{0\leq\widehat\lambda_i\leq1}\widehat\lambda_i\Reg_i(T)
 =[\Reg_i(T)]_+.
 \label{eq:positive-part-lift}
\end{align}

\subsection{The square-root entropy response}

For every integer $m>0$ and $\bx\in\Delta^m$, $\Gamma_m$ bounds
$\operatorname{Var}_{a\sim\bx}[-\log x(a)]$. Formally,
\begin{align*}
 \Gamma_m\coloneqq (\log m)^2+2\log m+2,
 \qquad
 \Xi_m\coloneqq10^4(1+\Gamma_m).
\end{align*}
For every integer $m\geq2$ and $\bx\in\Delta^m$, define
$\psi_m(\bx)\coloneqq\sum_{a=1}^m x(a)\log x(a)$, where
$0\log0\coloneqq0$.  For every integer $m\geq2$, define
\begin{align*}
 \widetilde\Delta^m
 \coloneqq\cbr{\rbr{\lambda,\by}:0\leq\lambda\leq1,
          \ \by\in\lambda\Delta^m}.
\end{align*}
For every integer $m\geq2$, define $\widetilde\psi_m$ on
$\widetilde\Delta^m$ by
\begin{align*}
 \widetilde\psi_m(\lambda,\by)
 \coloneqq-\sqrt{1-\lambda}
 +\sqrt\lambda\sbr{\psi_m\rbr{\by/\lambda}-2\Gamma_m-3},
 \qquad \lambda>0.
\end{align*}
Set $\widetilde\psi_m(0,\boldsymbol 0)\coloneqq-1$ for every integer
$m\geq2$.  For every integer $m\geq2$ and $\btheta\in\RR^m$, define
\begin{align*}
 Q_m(\btheta)
 \coloneqq\argmax_{\rbr{\lambda,\by}\in\widetilde\Delta^m}
 \cbr{\inner\btheta\by-\widetilde\psi_m(\lambda,\by)}.
\end{align*}

The next lemma establishes some basic properties of $Q_m(\btheta)$.
\begin{lemma}
\label{lem:convex-power-entropy}
For every integer $m\geq2$, the function $\widetilde\psi_m$ is finite and
convex on $\widetilde\Delta^m$ and strictly convex in its relative interior.
For every integer $m\geq2$ and $\btheta\in\RR^m$, the maximizer defining
$Q_m(\btheta)$ is unique and belongs to
$\operatorname{relint}\widetilde\Delta^m$.  More explicitly, for every
integer $m\geq2$ and $\btheta\in\RR^m$, the function on $(0,1)$ given by
\begin{align*}
 \lambda'\longmapsto
 \frac{1}{\sqrt{1-\lambda'}}
 -\frac{2\Gamma_m+3
 +\log\rbr{\sum_{a=1}^m\exp\rbr{\sqrt{\lambda'}\,\theta(a)}}}
 {\sqrt{\lambda'}}
 -\frac{\sum_{a=1}^m\theta(a)\exp\rbr{\sqrt{\lambda'}\,\theta(a)}}
 {\sum_{a=1}^m\exp\rbr{\sqrt{\lambda'}\,\theta(a)}}
\end{align*}
is strictly increasing, tends to $-\infty$ at the left endpoint, and tends to
$+\infty$ at the right endpoint.  Let $\lambda\in(0,1)$ denote its unique
zero.  Then
\begin{align*}
 Q_m(\btheta)
 =\rbr{\lambda,
 \frac{\lambda\rbr{\exp\rbr{\sqrt\lambda\,\theta(a)}}_{a\in[m]}}
 {\sum_{b=1}^m\exp\rbr{\sqrt\lambda\,\theta(b)}}}.
\end{align*}
Moreover, for every integer $m\geq2$,
\begin{align*}
 \sup_{\bmu\in\widetilde\Delta^m}\widetilde\psi_m(\bmu)
 -\inf_{\bmu\in\widetilde\Delta^m}\widetilde\psi_m(\bmu)
 \leq2\Gamma_m+\log m+4.
\end{align*}
\end{lemma}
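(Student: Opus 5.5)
The plan is to reduce everything to one scalar variable $\lambda$ by optimizing out the direction $\bx=\by/\lambda$ first. On $\lambda>0$, write $\widetilde\psi_m(\lambda,\lambda\bx)=-\sqrt{1-\lambda}+\sqrt\lambda\,\phi(\bx)$ with $\phi\coloneqq\psi_m-2\Gamma_m-3$.

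\textbf{Convexity and finiteness.} The map $(\lambda,\by)\mapsto\lambda\,\phi(\by/\lambda)$ is the perspective of the convex function $\phi$, hence jointly convex. Our function is instead $\sqrt\lambda\,\phi(\by/\lambda)$, which is not a perspective, so I would verify convexity by a direct Hessian computation in the relative interior. Parametrize directions as $(\dot\lambda,\dot\by)$ and set $\dot\bx=(\dot\by-\dot\lambda\bx)/\lambda$. The second derivative of $\sqrt\lambda\,\phi(\by/\lambda)$ along this direction is a quadratic form in $(\dot\lambda,\dot\bx)$. It contains $\sqrt\lambda\,\inner{\dot\bx}{\nabla^2\psi\,\dot\bx}$, which is positive since $\nabla^2\psi=\mathrm{diag}(1/x)$, plus cross terms $\dot\lambda\inner{\nabla\psi}{\dot\bx}/\sqrt\lambda$ and $-\tfrac14\lambda^{-3/2}\phi\,\dot\lambda^2$. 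The shift $-2\Gamma_m-3$ makes $\phi\le-2\Gamma_m-3<0$, so the $\dot\lambda^2$ coefficient is positive. The cross term should then be controlled by Cauchy--Schwarz in the metric $\mathrm{diag}(1/x)$. Since $\dot\bx$ is tangent to the simplex, $\inner{\nabla\psi}{\dot\bx}=\inner{\log\bx-c}{\dot\bx}$ for any constant $c$. Choosing $c$ as the $\bx$-mean of $\log x$ gives
\[
\inner{\nabla\psi}{\dot\bx}^2\le \operatorname{Var}_{a\sim\bx}[\log x(a)]\cdot\inner{\dot\bx}{\mathrm{diag}(1/x)\dot\bx}\le\Gamma_m\inner{\dot\bx}{\mathrm{diag}(1/x)\dot\bx}.
\]
Positive definiteness then needs $(\text{cross coeff})^2<4\cdot\tfrac14|\phi|\cdot1$, roughly $\Gamma_m<|\phi|$ up to constants, which $2\Gamma_m+3$ guarantees. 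The term $-\sqrt{1-\lambda}$ adds a strictly convex contribution in $\lambda$, which gives strict convexity. Finiteness holds because $\psi_m\in[-\log m,0]$, and convexity extends to the boundary by lower semicontinuity: as $\lambda\to0$ the value tends to $-1$, matching the definition. \textbf{I expect this Hessian/Cauchy--Schwarz step to be the main obstacle}, together with checking that the boundary value is consistent.

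\textbf{Maximizer.} For fixed $\lambda>0$, maximizing $\lambda\inner\btheta\bx-\sqrt\lambda\,\psi_m(\bx)$ over the simplex is a scaled entropy problem. The solution is $\bx\propto\exp(\sqrt\lambda\,\btheta)$, and the optimal value is $\sqrt\lambda\log\sum_a\exp(\sqrt\lambda\,\theta(a))$. The resulting one-dimensional objective is
\[
h(\lambda)=\sqrt{1-\lambda}+\sqrt\lambda\,(2\Gamma_m+3)+\sqrt\lambda\,\mathrm{LSE}(\sqrt\lambda\,\btheta).
\]
Differentiating, $h'(\lambda)$ equals $-\tfrac12$ times the displayed function after using $\tfrac{d}{d\lambda}\mathrm{LSE}(\sqrt\lambda\btheta)=\tfrac{1}{2\sqrt\lambda}\langle\btheta,\text{softmax}\rangle$. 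The expression should be rechecked here, since the displayed function has $\sqrt{\lambda'}$ in the denominator and I must match the constants. Strict monotonicity follows from strict concavity of $h$, which is inherited from strict convexity of $\widetilde\psi_m$ via partial maximization. For the limits: as $\lambda'\to0$ the middle term behaves like $-(2\Gamma_m+3+\log m)/\sqrt{\lambda'}\to-\infty$, and as $\lambda'\to1$ the first term gives $+\infty$ while the others stay bounded. Hence there is a unique zero in $(0,1)$. Uniqueness of the full maximizer follows from strict convexity, and membership in the relative interior from $\lambda\in(0,1)$ together with the softmax having full support.

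\textbf{Range bound.} Use $\psi_m\in[-\log m,0]$. The supremum is at most $\max(-\sqrt{1-\lambda}-\sqrt\lambda(2\Gamma_m+3))\le0$. The infimum is at least $-\sqrt{1-\lambda}-\sqrt\lambda(\log m+2\Gamma_m+3)\ge-(1+\log m+2\Gamma_m+3)$. Their difference is therefore at most $2\Gamma_m+\log m+4$.
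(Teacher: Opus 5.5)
Your proposal is correct and follows essentially the same route as the paper: a Hessian computation in the coordinates $(s, s\bx+\lambda\bz)$, with the cross term controlled by Cauchy--Schwarz against the varentropy bound $W_m(\bx)\le\Gamma_m$ (this is why the shift $2\Gamma_m+3$ works); then entropy conjugacy reduces the problem to the scalar objective, whose strict concavity is inherited by partial maximization; and the range bound comes from $-\log m\le\psi_m\le0$. The only differences are minor: you get interiority of the maximizer from the scalar reduction (a zero in $(0,1)$ plus the full-support softmax) rather than from divergence of the boundary derivatives, and extending convexity to the closed domain really uses continuity of $\widetilde\psi_m$ on all of $\widetilde\Delta^m$ (which does hold), not merely lower semicontinuity.
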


The proof is deferred to \Cref{sec:power-entropy-lift}.

\begin{remark}
Fix an integer $m\geq2$ and $\btheta\in\RR^m$.  By
\Cref{lem:convex-power-entropy}, binary search on the sign of the strictly
increasing function in the lemma localizes its unique zero $\lambda$ and the formula in the lemma determines $Q_m(\btheta)$.
\end{remark}

For every integer $m\geq2$ and $\btheta\in\RR^m$, write
\begin{align*}
 \rbr{\lambda_m(\btheta),\by_m(\btheta)}
 \coloneqq Q_m(\btheta),
 \qquad
 \bx_m(\btheta)
 \coloneqq \by_m(\btheta)/\lambda_m(\btheta).
\end{align*}
For every integer $m\geq2$ and
$\bmu,\bmu'\in\operatorname{relint}\widetilde\Delta^m$, define
\begin{align*}
 D_{\widetilde\psi_m}(\bmu',\bmu)
 \coloneqq&\widetilde\psi_m(\bmu')-\widetilde\psi_m(\bmu)
   -\inner{\nabla\widetilde\psi_m(\bmu)}{\bmu'-\bmu},\\
 D_{\widetilde\psi_m}^{\rm sym}(\bmu',\bmu)
 \coloneqq&D_{\widetilde\psi_m}(\bmu',\bmu)
 +D_{\widetilde\psi_m}(\bmu,\bmu').
\end{align*}

For every $i\in[N]$, abbreviate
\begin{align*}
 \psi_i\coloneqq&\psi_{m_i},
 \qquad
 \widetilde\psi_i\coloneqq \widetilde\psi_{m_i},
 \qquad
 Q_i\coloneqq Q_{m_i},
 \\
 \Gamma_i\coloneqq&\Gamma_{m_i},
 \qquad
 \Xi_i\coloneqq \Xi_{m_i}.
\end{align*}
Set $\Xi_{\max}\coloneqq\max_{i\in[N]}\Xi_i$.  For every $i\in[N]$ and
$\btheta\in\RR^{m_i}$, write
$Q_i(\btheta)=(\lambda_{m_i}(\btheta),
\lambda_{m_i}(\btheta)\bx_{m_i}(\btheta))$.

\subsection{ECHO-OFTRL}

We call the algorithm below \emph{ECHO-OFTRL}, where ECHO stands for
\emph{EMA cascade for high-order optimism} and EMA denotes
\emph{exponential moving average}.

Fix a common learning rate $\eta>0$.  On every space of bounded bi-infinite
sequences in a finite-dimensional normed space, let $I$ be the identity
operator and let $\mathsf L$ be the backward shift, defined for every
sequence $\bz$ in that space and integer $t$ by
$\rbr{\mathsf L\bz}^{(t)}\coloneqq\bz^{(t-1)}$.  Set
\begin{align}
 \rho\coloneqq&1-\frac1N,
 \qquad
 \mathsf D\coloneqq I-\mathsf L,
 \qquad
 \mathsf A\coloneqq (I-\rho\mathsf L)^{-1}
 =\sum_{k=0}^{\infty}\rho^k\mathsf L^k,
 \nonumber\\
 \delta\coloneqq&\mathsf A\mathsf D
 =\frac{I-\mathsf L}{I-\rho\mathsf L}.
 \label{eq:stable-filter-law}
\end{align}
Since $\mathsf A$ and $\mathsf D$ commute, $\delta=\mathsf D\mathsf A$.
Thus, $\delta$ takes the one-step difference of the exponentially
weighted sequence generated by $\mathsf A$.

For every sequence
$\bz$ that vanishes at nonpositive times and every
integer $t\geq1$,
\begin{align*}
 \rbr{(I-\delta)\bz}^{(t)}
 =(1-\rho)\sum_{k=1}^{\infty}\rho^{k-1}\bz^{(t-k)}.
\end{align*}

For every $i\in[N]$, define
\begin{align}
 \widehat\bg_i\coloneqq&(I-\delta^N)\bg_i,
 \nonumber\\
 \be_i\coloneqq&\bg_i-\widehat\bg_i
 =\delta^N\bg_i.
 \label{eq:stable-filter-prediction}
\end{align}
Thus, for every $i\in[N]$, the prediction error $\be_i$ is the stable
$N^{\rm th}$-order filtered difference $\delta^N\bg_i$.  Since
$I-\delta^N=(I-\delta)\sum_{h=1}^N\delta^{h-1}$, the preceding formula for
$I-\delta$ shows that $\widehat\bg_i^{(t)}$ depends only on feedback from
rounds before $t$.  The following EMA cascade computes this predictor
recursively.

Every player $i\in[N]$ maintains the sequences
$(\bema_{i,h}^{(t)})_{t\geq1}$ in $\RR^{m_i}$ for $h\in[N]$,
initialized by
$\bema_{i,h}^{(1)}\coloneqq\boldsymbol 0$ for every $h\in[N]$.
For every $i\in[N]$ and integer $t\geq1$, define
\begin{align*}
 \bS_i^{(t-1)}\coloneqq\sum_{s=1}^{t-1}\bg_i^{(s)}.
\end{align*}
For every $i\in[N]$ and round $t\geq1$, player $i$ forms
$\widehat\bg_i^{(t)}$ and plays
\begin{align}
 \widehat\bg_i^{(t)}
 =&\sum_{h=1}^N\bema_{i,h}^{(t)},
 \label{eq:ema-cascade-prediction}\\
 \bmu_i^{(t)}
 \coloneqq& Q_i\rbr{\eta\rbr{\bS_i^{(t-1)}+\widehat\bg_i^{(t)}}},
 \qquad
 \bmu_i^{(t)}=\rbr{\lambda_i^{(t)},
 \lambda_i^{(t)}\bx_i^{(t)}}.
 \label{eq:lifted-oftrl-update}
\end{align}
For every $i\in[N]$ and $t\geq1$, after observing $\bu_i^{(t)}=\cU_i(\bx_{-i}^{(t)})$ and computing
$\bg_i^{(t)}$, player $i$ updates all $N$ vectors simultaneously by
\begin{align}
 \bema_{i,h}^{(t+1)}
 \coloneqq\rho\bema_{i,h}^{(t)}
 +(1-\rho)\rbr{\bg_i^{(t)}
 -\sum_{\ell=1}^{h-1}\bema_{i,\ell}^{(t)}},
 \qquad h\in[N].
 \label{eq:ema-cascade-update}
\end{align}
\Cref{eq:stable-filter-law} gives
$(I-\rho\mathsf L)(I-\delta)=(1-\rho)\mathsf L$.  Hence, for every bounded
bi-infinite sequence $\bz$ in a finite-dimensional normed space and every
integer $t\geq0$,
$\rbr{(I-\delta)\bz}^{(t+1)}
=\rho\rbr{(I-\delta)\bz}^{(t)}+(1-\rho)\bz^{(t)}$.
Fix $i,h\in[N]$ and an integer $t\geq1$.  If
$\bema_{i,\ell}^{(t)}
=\rbr{(I-\delta)\delta^{\ell-1}\bg_i}^{(t)}$ for every $\ell<h$, then
\begin{align*}
 \bg_i^{(t)}-\sum_{\ell=1}^{h-1}\bema_{i,\ell}^{(t)}
 =\rbr{\rbr{I-(I-\delta)
 \sum_{\ell=1}^{h-1}\delta^{\ell-1}}\bg_i}^{(t)}
 =\rbr{\delta^{h-1}\bg_i}^{(t)},
\end{align*}
where the last equality is the geometric-series identity.  The zero initial
states and \Cref{eq:ema-cascade-update} therefore imply, by induction
over $h\in[N]$, that
$\bema_{i,h}^{(t)}
=\rbr{(I-\delta)\delta^{h-1}\bg_i}^{(t)}$ for every integer $t\geq1$.
Summing over $h\in[N]$ and using
$(I-\delta)\sum_{h=1}^N\delta^{h-1}=I-\delta^N$ verifies
\Cref{eq:ema-cascade-prediction}.

For every $i\in[N]$, extend $\bx_i$ and $\lambda_i$ by
$\bx_i^{(t)}\coloneqq\bx_{m_i}(\boldsymbol 0)$ and
$\lambda_i^{(t)}\coloneqq\lambda_{m_i}(\boldsymbol 0)$ for $t\leq0$, and write
$\bx_i\coloneqq(\bx_i^{(t)})_{t\in\ZZ}$ and
$\lambda_i\coloneqq(\lambda_i^{(t)})_{t\in\ZZ}$.
Thus, for every $i\in[N]$ and $t\geq1$, player $i$ forms
$\widehat\bg_i^{(t)}$ from past
feedback, computes and plays $\bx_i^{(t)}$, observes $\bu_i^{(t)}$, computes
$\bg_i^{(t)}$, evaluates $\be_i^{(t)}$, updates the cascade by
\Cref{eq:ema-cascade-update}, and sets
$\bS_i^{(t)}\coloneqq\bS_i^{(t-1)}+\bg_i^{(t)}$.  For every $i\in[N]$, the
cascade uses $O(Nm_i)$ memory and arithmetic per round.
For every $i\in[N]$ and $t\geq1$, \Cref{lem:convex-power-entropy} gives
$0<\lambda_i^{(t)}<1$, so $\bx_i^{(t)}$ is well defined.

For every $i\in[N]$ and $t\geq1$, fixing $\lambda_i^{(t)}$ in the lifted
optimistic-FTRL objective gives, for every $a\in[m_i]$,
\begin{align*}
 x_i^{(t)}(a)
 =\frac{\exp\rbr{
   \eta\sqrt{\lambda_i^{(t)}}
   [\bS_i^{(t-1)}+\widehat\bg_i^{(t)}](a)}}
 {\sum_{b=1}^{m_i}\exp\rbr{
   \eta\sqrt{\lambda_i^{(t)}}
   [\bS_i^{(t-1)}+\widehat\bg_i^{(t)}](b)}}.
\end{align*}
Thus, for every $i\in[N]$ and $t\geq1$, $\bx_i^{(t)}$ is the Hedge strategy with learning rate
$\eta\sqrt{\lambda_i^{(t)}}$.

For every $i\in[N]$ and $t\geq1$, \Cref{alg:echo-oftrl} computes
$\be_i^{(t)}$ in
\Cref{eq:stable-filter-prediction} with one loop over $h\in[N]$.
\begin{algorithm}[H]
\caption{ECHO-OFTRL for player $i$}
\label{alg:echo-oftrl}
\begin{algorithmic}[1]
\Require Common learning rate $\eta>0$ and $\rho=1-1/N$
\State Initialize $\bS_i^{(0)}\gets\boldsymbol 0$ and
       $\bema_{i,h}^{(1)}\gets\boldsymbol 0$ for every $h\in[N]$
\For{$t=1,2,\dots$}
  \State Set
         $\widehat\bg_i^{(t)}
         \gets\sum_{h=1}^N\bema_{i,h}^{(t)}$
  \State Compute $\bmu_i^{(t)}$ and play $\bx_i^{(t)}$ using
         \Cref{eq:lifted-oftrl-update}
  \State Observe $\bu_i^{(t)}$, compute
         $\bg_i^{(t)}\gets\bu_i^{(t)}
         -\inner{\bu_i^{(t)}}{\bx_i^{(t)}}\one$, and set
         $\be_{i,0}^{(t)}\gets\bg_i^{(t)}$
  \For{$h=1,\dots,N$}
    \State Set
           $\bema_{i,h}^{(t+1)}
           \gets\rho\bema_{i,h}^{(t)}
           +(1-\rho)\be_{i,h-1}^{(t)}$
    \State Set
           $\be_{i,h}^{(t)}
           \gets\be_{i,h-1}^{(t)}-\bema_{i,h}^{(t)}$
  \EndFor
  \State Set $\be_i^{(t)}\gets\be_{i,N}^{(t)}$ and
         $\bS_i^{(t)}
         \gets\bS_i^{(t-1)}+\bg_i^{(t)}$
\EndFor
\end{algorithmic}
\end{algorithm}

\subsection{Main result}

\begin{theorem}
\label{thm:entropy-main}
In every $N$-player finite normal-form game with utilities in $[0,1]$, suppose
every player $i\in[N]$ runs ECHO-OFTRL as specified in
\Cref{alg:echo-oftrl} with $Q_i$ and the
constant learning rate
\begin{align*}
 \eta\coloneqq\frac{2^{-94}}{N^{20}\Xi_{\max}}.
\end{align*}
Then, simultaneously for every horizon $T\geq1$,
\begin{align*}
 \sum_{i=1}^N[\Reg_i(T)]_+
 \leq2^{112}N^{21}
 \rbr{1+\log\rbr{m_{\max}+1}}^4.
\end{align*}
\end{theorem}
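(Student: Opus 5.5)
We follow the lifted-regret template of \citet{farina2022near}, replacing one-step optimism with the filtered $N^{\rm th}$-order prediction $\widehat\bg_i=(I-\delta^N)\bg_i$. By \cref{eq:positive-part-lift}, $[\Reg_i(T)]_+$ is the lifted regret of player $i$ against $\bg_i$ on $\widetilde\Delta_i$. Since \cref{eq:lifted-oftrl-update} is optimistic FTRL on $\widetilde\Delta_i$ with regularizer $\widetilde\psi_i/\eta$ and prediction $\widehat\bg_i^{(t)}$, the standard OFTRL analysis gives an RVU-type inequality:
\begin{align*}
 [\Reg_i(T)]_+\le \frac{\operatorname{range}(\widetilde\psi_i)}{\eta}
 +\sum_{t=1}^T\inner{\be_i^{(t)}}{\bmu_i^{(t)}-\bmu_i'^{(t)}}
 -\frac{1}{2\eta}\sum_{t}\rbr{D_{\widetilde\psi_i}(\bmu_i^{(t)},\bmu_i'^{(t-1)})+D_{\widetilde\psi_i}(\bmu_i'^{(t)},\bmu_i^{(t)})}.
\end{align*}
Here $\bmu_i'$ are the unpredicted FTRL iterates, and the range term is controlled by \Cref{lem:convex-power-entropy}. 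Because $\eta\sqrt{\lambda}$ is the effective Hedge rate, local strong convexity of $\widetilde\psi_i$ in the lifted geometry lets us bound the error term by $\eta\,\lambda_i^{(t)}\nbr{\be_i^{(t)}}_\infty^2$ (times $\Gamma_i$-dependent constants) and the negative terms by $-\frac{c}{\eta}$ times the squared movement of $\bx_i$ (weighted by $\lambda_i$) plus the movement of $\lambda_i$. The key point, inherited from the lifting, is that each player's bound is nonnegative, so summing over $i$ gives $\sum_i[\Reg_i]_+$, and negative regrets cannot cancel positive ones.

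The core step is to show that
\[
\sum_t\nbr{\be_i^{(t)}}_\infty^2=\sum_t\nbr{(\delta^N\bg_i)^{(t)}}_\infty^2
\]
is bounded by a constant plus $\mathrm{poly}(N)\,\eta^2$ times the movement terms of all players. We would adapt the high-order smoothness argument of \citet{daskalakis2021near-optimal} to the filter $\delta=\mathsf A\mathsf D$. Since $\bg_i^{(t)}=\mathcal G_i(\bx^{(t)})$ is multilinear in the joint profile, we expand $\delta^N\bg_i$ via a discrete chain/Leibniz rule. Each factor of $\mathsf D$ landing on some $\bx_j$ produces a movement increment, and $\mathsf D\bx_j$ is itself governed by the OFTRL map applied to $\eta(\bg_j+\mathsf D\widehat\bg_j)$. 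Iterating, the $h$-fold filtered difference of strategies should be bounded by $(C N\eta)^{h}$ times quantities that involve lower-order differences.

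From this we would derive a ``bootstrap'' inequality of the schematic form
\[
\sum_t\nbr{\delta^h\bg_i^{(t)}}^2\le \tfrac12\sum_t\nbr{\delta^{h-1}\bg^{(t)}}^2+\mathrm{poly}(N)\,\eta^2\sum_t\nbr{\mathsf D\bx^{(t)}}^2,
\]
and use the frequency gain bound $(2/(1+\rho))^N\le e$ together with polynomial time-domain $\ell_1$ norms of $\mathsf A^k$ (roughly $N^k$) to keep all constants polynomial in $N$. The transient caused by the zero initialization at $t\le0$ should contribute only an $O(\mathrm{poly}(N))$ constant, since $\mathsf A$ decays geometrically at rate $\rho$ and the extended strategies are constant for $t\le0$.

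Combining the two steps with $\eta=2^{-94}N^{-20}\Xi_{\max}^{-1}$, the positive error terms become at most half the negative movement terms (summed over players) plus a constant. This leaves $\sum_i[\Reg_i(T)]_+\lesssim N(2\Gamma+\log m+4)/\eta+\mathrm{const}$, which is $O(N^{21}\Xi_{\max}\log m)$, i.e., $\mathrm{poly}(N)\log^4 m$. The main obstacle is the second step. Smoothness must be propagated through the nonlinear map $Q_i$, whose effective rate $\eta\sqrt\lambda$ varies over time, through $N$ filter orders, without exponential blowup and without picking up a $\log T$. It is also delicate that movement of $\lambda_j$ couples into $\bx_j$. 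We would first try to handle this by bounding higher derivatives of the softmax-with-$\lambda$ map via \Cref{lem:convex-power-entropy}'s implicit characterization of $\lambda$, treating $\lambda$ as a smooth function of $\btheta$.
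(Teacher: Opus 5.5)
Your lifted RVU step is essentially the paper's. The paper puts the weight $\rbr{\lambda_i^{(t)}}^{3/2}$ on $\nbr{\be_i^{(t)}}_\infty^2$; your $\lambda_i^{(t)}$ would also do. The gap is in the core step, where you discard this weight. Your bootstrap ends in the unweighted movement $\sum_t\nbr{\rbr{\mathsf D\bx_j}^{(t)}}_1^2$, and the negative terms of the lifted bound do not control that quantity. They only control $\sqrt{\lambda_j^{(t)}}\,\nbr{\rbr{\mathsf D\bx_j}^{(t)}}_1^2$, which is \Cref{eq:movement-path-bound}. Moreover, $\lambda_j^{(t)}$ has no lower bound uniform in $T$: it decays when player $j$'s cumulative regret is very negative, while $\bx_j$ still moves by about $\eta\sqrt{\lambda_j^{(t)}}$ per round. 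Absorbing $\eta\cdot\mathrm{poly}(N)\eta^2\nbr{\rbr{\mathsf D\bx_j}^{(t)}}_1^2$ into $\eta^{-1}\sqrt{\lambda_j^{(t)}}\nbr{\rbr{\mathsf D\bx_j}^{(t)}}_1^2$ would require $\sqrt{\lambda_j^{(t)}}\gtrsim\mathrm{poly}(N)\eta^4$ for all $t$, which fails. Keeping your weight $\lambda_i$ does not fix this by itself. The cross terms $\partial_j\mathcal G_i(\bx^{(t)})\rbr{\delta^N\bx_j}^{(t)}$ from the multilinear expansion carry player $i$'s weight, not the $\sqrt{\lambda_j}$ that player $j$'s Bregman terms need.

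The missing idea is how the $N$ filter orders are spent. The paper tracks subset weights $w_{r,S}=\rbr{\lambda_r}^{3/2}\prod_{k\in S}\lambda_k$.
\begin{itemize}
\item When the expansion of a filtered payoff produces $\delta^h\bx_j$ with $j\in S\cup\cbr{r}$, the weight already dominates $\sqrt{\lambda_j}$. Then \Cref{lem:filtered-action-path-bound} closes that branch against $\mathcal P_{j,T}$.
\item When $j$ is new, the paper uses the response identity $\delta^h\bx_j=\overline{\bJ}_j\mathsf K_\star\delta^{h-1}\bg_j+\bc_{j,h}$ with $\nbr{\overline{\bJ}_j^{(t)}}_{\infty\to1}\le8\gamma\sqrt{\lambda_j^{(t)}}$. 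This lowers the order by one and multiplies the weight by $\lambda_j$. The cost is a commutator residual bounded by $\mathrm{poly}(N)\gamma^2\mathcal P_{j,T}$.
\end{itemize}
Each new-player step adds a distinct player, so at most $N-1$ such steps occur. That is exactly why order $N$ suffices and no $\log T$ appears. Your DFG-style $(CN\eta)^h$ bootstrap has no mechanism for acquiring $\lambda_j$, and it gives no reason why order $N$ should be the right one.

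A secondary issue concerns your kernel bounds. The $\ell_1$ norms of $\mathsf A^k$ grow like $N^k$, which is exponential for $k\sim N$. The $\ell^2$ frequency-gain bound also does not pass to weighted sums. The paper instead uses a Cauchy estimate on $\abr{\varsigma}=1+1/(4N)$ to get exponentially tilted kernel moments of order $N^6$ for $\delta^h$, $\mathsf A\delta^{h-1}$, and $\mathsf A-\delta^{N+1}$. It combines these with slowly varying weights via \Cref{lem:slow-weight-convolution}.
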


A proof sketch is in \Cref{sec:multilinear-analysis} and the formal proof is in
\Cref{sec:entropy-main-proof}.

\section{Proof sketch}
\label{sec:multilinear-analysis}

The formal proof of \Cref{thm:entropy-main} appears in
\Cref{sec:entropy-main-proof}; the subsequent appendices prove its
geometric and filter estimates.  This section outlines the main steps of the
argument; $O\rbr{\cdot}$ suppresses only universal numerical factors.

\subsection{The optimistic-FTRL bound}

Fix a horizon $T\geq1$.  For every player $i\in[N]$, set
$\bmu_i^{(0)}\coloneqq Q_i(\boldsymbol 0)$ and define
\begin{align*}
 \mathcal P_{i,T}
 \coloneqq\sum_{t=1}^T
 D_{\widetilde\psi_i}(\bmu_i^{(t)},\bmu_i^{(t-1)}),
 \qquad
 \mathcal P_T^\Xi\coloneqq \sum_{i=1}^N\Xi_i\mathcal P_{i,T}.
\end{align*}
The optimistic-FTRL calculation in \Cref{eq:lifted-rvu}, together
with \Cref{eq:stable-filter-prediction}, gives
\begin{align*}
 \sum_{i=1}^N[\Reg_i(T)]_+
 \leq\frac4\eta\sum_{i=1}^N\Gamma_i
 +80\eta\sum_{i=1}^N\sum_{t=1}^T\Xi_i
  \rbr{\lambda_i^{(t)}}^{3/2}
  \nbr{\rbr{\delta^N\bg_i}^{(t)}}_\infty^2
 -\frac1{2\eta}\sum_{i=1}^N\mathcal P_{i,T}.
\end{align*}
It therefore remains to prove the variation estimate
\begin{align*}
 \sum_{i=1}^N\sum_{t=1}^T\rbr{\lambda_i^{(t)}}^{3/2}
 \nbr{\rbr{\delta^N\bg_i}^{(t)}}_\infty^2
 \leq O\rbr{N^{16}\rbr{1+\mathcal P_T^\Xi}}.
\end{align*}
The weights $(\eta\Xi_i)_{i\in[N]}$ are inserted only after this estimate,
when the resulting sum is absorbed by
$-\sum_i\mathcal P_{i,T}/(2\eta)$.

\subsection{A recursive bound}

For a positive weight sequence $w=\rbr{w^{(t)}}_{t\leq T}$, a player $i\in[N]$,
and an integer $1\leq h\leq N$, define
\begin{align*}
 \cE_i^h(w)
 \coloneqq\sum_{t=1}^Tw^{(t)}
 \nbr{\rbr{\delta^h\bx_i}^{(t)}}_1^2.
\end{align*}
By \Cref{lem:filtered-multiaffine-expansion}, for every $i\in[N]$,
integer $1\leq h\leq N$, and positive weight sequence
$w=\rbr{w^{(t)}}_{t\leq T}$ satisfying
$w^{(t)}\leq1$ for every $t\in[T]$,
\begin{align*}
 \sum_{t=1}^Tw^{(t)}
 \nbr{\rbr{\delta^h\bg_i}^{(t)}}_\infty^2
 \leq8N\sum_{j=1}^N\cE_j^h(w)
 +O\rbr{N^{13}+\eta^2\Xi_{\max}^2N^{27}\mathcal P_T^\Xi}.
\end{align*}

For every $i\in[N]$, integer $1\leq h\leq N$, and positive weight sequence
$w=\rbr{w^{(t)}}_{t\leq T}$ satisfying
$w^{(t)}\leq\sqrt{\lambda_i^{(t)}}$ for every $t\in[T]$,
\Cref{lem:filtered-action-path-bound} gives
\begin{align}
 \cE_i^h(w)
 \leq O\rbr{N^{12}\Xi_i\mathcal P_{i,T}}.\label{eq:direct-path-bound}
\end{align}
For every $r\in[N]$, $S\subseteq[N]\setminus\cbr r$, and $t\in\ZZ$,
set
\begin{align}
 w_{r,S}^{(t)}
 \coloneqq\rbr{\lambda_r^{(t)}}^{3/2}
 \prod_{k\in S}\lambda_k^{(t)}.
 \label{eq:subset-weight}
\end{align}
Fix $r,j\in[N]$ and $S\subseteq[N]\setminus\cbr r$, and set
$h=N-|S|$.  If $j\in S\cup\cbr r$, then
$w_{r,S}^{(t)}\leq\sqrt{\lambda_j^{(t)}}$ for every $t\in[T]$, so
\Cref{eq:direct-path-bound} applies.  If $j\notin S\cup\cbr r$, then
$h\geq2$ and
\begin{align*}
 \mathsf A\delta^{h-1}(I-\mathsf D\delta^N)
 =\rbr{\mathsf A-\delta^{N+1}}\delta^{h-1}.
\end{align*}
The proof of \Cref{eq:finite-label-transfer} verifies the hypothesis of
\Cref{lem:slow-weight-convolution} for every weight in
\Cref{eq:subset-weight}, in particular $w_{r,S\cup\cbr j}$.  Combining this with
\Cref{lem:response-secant-estimates,lem:filtered-response-identity,lem:filtered-multiaffine-expansion}
gives
\begin{align*}
 \cE_j^{N-|S|}(w_{r,S})
 \leq O\rbr{N^{13}\eta^2\Xi_{\max}^2
 \sum_{k=1}^N\cE_k^{N-\abr{S\cup\cbr j}}
 \rbr{w_{r,S\cup\cbr j}}
 +N^{12}\rbr{1+\mathcal P_T^\Xi}}.
\end{align*}
The constants are specified in
\Cref{eq:finite-label-transfer}.

\subsection{Iterating the bound}

For every $r\in[N]$, begin with $S=\varnothing$, for which
$w_{r,\varnothing}^{(t)}=(\lambda_r^{(t)})^{3/2}$ for every $t\in\ZZ$.
Consider a current term $\cE_j^{N-|S|}(w_{r,S})$, where $r,j\in[N]$ and
$S\subseteq[N]\setminus\cbr r$.  If $j\notin S\cup\cbr r$, the preceding
estimate replaces $S$ by $S\cup\cbr j$ and lowers the order by one; otherwise,
\Cref{eq:direct-path-bound} applies.  Thus at most $N-1$ additions precede a
term to which \Cref{eq:direct-path-bound} applies.

Combining the resulting estimates gives
\begin{align*}
 \sum_{i=1}^N\sum_{t=1}^T\rbr{\lambda_i^{(t)}}^{3/2}
 \nbr{\rbr{\delta^N\bg_i}^{(t)}}_\infty^2
 \leq\operatorname{poly}(N)\rbr{1+\mathcal P_T^\Xi}.
\end{align*}

For the learning rate in \Cref{thm:entropy-main}, the preceding estimate gives
\begin{align*}
    80\eta\sum_{i=1}^N\sum_{t=1}^T\Xi_i
  \rbr{\lambda_i^{(t)}}^{3/2}
  \nbr{\rbr{\delta^N\bg_i}^{(t)}}_\infty^2
 -\frac1{2\eta}\sum_{i=1}^N\mathcal P_{i,T}\leq \operatorname{poly}(N).
\end{align*}

\section*{Acknowledgments}

Initial versions of the proofs were developed with the assistance of ChatGPT 5.5 and ChatGPT 5.6 Sol. The authors subsequently streamlined and revised the proofs substantially.

\bibliographystyle{plainnat}
\bibliography{main}

\newpage

\appendix

\section[Proof of the main theorem]{Proof of \Cref{thm:entropy-main}}
\label{sec:entropy-main-proof}

Set
\begin{align}
 \gamma\coloneqq\eta\Xi_{\max}.
 \label{eq:normalized-learning-rate}
\end{align}
For every integer $m\geq2$ and $\btheta\in\RR^m$, write
$\bJ_m^x(\btheta)\coloneqq\bJ_{\bx_m}(\btheta)$ whenever the response is
differentiable.  For every integer $m\geq2$ and linear map
$\bJ\colon\RR^m\to\RR^m$, write
\begin{align*}
 \nbr\bJ_{\infty\to1}
 \coloneqq\sup_{\nbr\bz_\infty\leq1}\nbr{\bJ\bz}_1.
\end{align*}

\begin{lemma}
\label{lem:square-root-entropy-geometry}
For every integer $m\geq2$ and $\btheta\in\RR^m$, the maximizer defining
$Q_m(\btheta)$ is unique and belongs to
$\operatorname{relint}\widetilde\Delta^m$.  For every integer $m\geq2$, the
map $Q_m$ is continuously differentiable and $\bJ_m^x$ is differentiable.
For every integer $m\geq2$, score $\btheta\in\RR^m$, and
direction $\bxi\in\RR^m$,
\begin{align}
 \nbr{\bJ_m^x(\btheta)}_{\infty\to1}
 \leq&4\Xi_m\sqrt{\lambda_m(\btheta)},
 \label{eq:filter-jacobian-size}\\
 \nbr{\left.\dfrac{\ud}{\ud r}
 \bJ_m^x(\btheta+r\bxi)\right|_{r=0}}_{\infty\to1}
 \leq&\Xi_m
 \sqrt{\inner{\bxi}{
 \left.\dfrac{\ud}{\ud r}\by_m(\btheta+r\bxi)\right|_{r=0}}}.
 \label{eq:filter-jacobian-variation}
\end{align}
For every integer $m\geq2$ and $\btheta,\be\in\RR^m$, set
$\btheta'\coloneqq\btheta+\be$,
$\bmu\coloneqq Q_m(\btheta)$, $\bmu'\coloneqq Q_m(\btheta')$, and
$\widehat\lambda\coloneqq\max\cbr{\lambda_m(\btheta),\lambda_m(\btheta')}$.  If
$\Xi_m\nbr\be_\infty\leq1/100$, then
\begin{align}
 \abr{\log\frac{\lambda_m(\btheta')}{\lambda_m(\btheta)}}
 \leq&\Xi_m\nbr\be_\infty,
 \label{eq:filter-mass-stability}\\
 \inner{\be}{\by_m(\btheta')-\by_m(\btheta)}
 \leq&20\sqrt2\Xi_m\widehat\lambda^{3/2}
       \nbr\be_\infty^2,
 \label{eq:lifted-local-smoothness}\\
 \sqrt{\widehat\lambda}
 \nbr{\bx_m(\btheta')-\bx_m(\btheta)}_1^2
 \leq&16\sqrt2\Xi_m
 \min\cbr{D_{\widetilde\psi_m}(\bmu',\bmu),
 D_{\widetilde\psi_m}(\bmu,\bmu')},
 \label{eq:lifted-local-norm-bound}\\
 D_{\widetilde\psi_m}^{\rm sym}(\bmu',\bmu)
 \leq&4
 \min\cbr{D_{\widetilde\psi_m}(\bmu',\bmu),
 D_{\widetilde\psi_m}(\bmu,\bmu')}.
 \label{eq:filter-directed-comparability}
\end{align}
\end{lemma}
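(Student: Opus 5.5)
The plan is to reduce everything to explicit formulas for $Q_m$ coming from \Cref{lem:convex-power-entropy}. Uniqueness and interiority are already in that lemma. For smoothness, I will write $\lambda=\lambda_m(\btheta)$ as the unique zero of the strictly increasing function $F(\lambda',\btheta)$ displayed there. Since $F$ is smooth on $(0,1)\times\RR^m$ with $\partial_{\lambda'}F>0$, the implicit function theorem makes $\lambda_m$ a $C^\infty$ function of $\btheta$. Then $\bx_m(\btheta)=\operatorname{softmax}(\sqrt{\lambda_m(\btheta)}\,\btheta)$ and $\by_m=\lambda_m\bx_m$ are smooth, so $Q_m$ is $C^1$ and $\bJ_m^x$ is differentiable.

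For the Jacobian bounds I will use the chain rule:
\[
\bJ_m^x[\bxi]=\Sigma\bigl(\sqrt\lambda\,\bxi+\tfrac{\dot\lambda}{2\sqrt\lambda}\btheta\bigr),
\qquad
\Sigma=\operatorname{diag}(\bx)-\bx\bx^\top .
\]
The first term gives $\nbr{\Sigma\sqrt\lambda\bxi}_1\le2\sqrt\lambda\nbr\bxi_\infty$. For the second, I compute $\dot\lambda=-\partial_\btheta F/\partial_{\lambda'}F$ and $\Sigma\btheta$ in terms of the centered log-probabilities $\log x(a)=\sqrt\lambda\theta(a)-\log Z$. Their variance under $\bx$ is at most $\Gamma_m$ by the definition of $\Gamma_m$. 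The key quantity is the lower bound on $\partial_{\lambda'}F$: the term $(2\Gamma_m+3)/(2\lambda^{3/2})$ together with the $\tfrac12(1-\lambda)^{-3/2}$ term dominates the negative variance contributions, and this is exactly why the shift $-2\Gamma_m-3$ appears in $\widetilde\psi_m$. With that lower bound, Cauchy--Schwarz against $\operatorname{Var}_\bx$ should give \eqref{eq:filter-jacobian-size} with constant $4\Xi_m$, with room to spare. For \eqref{eq:filter-jacobian-variation}, I differentiate once more. Each resulting term is a product of quantities bounded as above times either $\sqrt{\lambda}\,\operatorname{Var}_\bx$-type quantities or $\dot\lambda$. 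I then observe that
\[
\inner{\bxi}{\dot\by}=\lambda\,\bJ_m^x[\bxi]\cdot\bxi+\dot\lambda\inner{\bxi}{\bx},
\]
which is the Hessian quadratic form of the convex conjugate $\widetilde\psi_m^*$. It therefore equals a nonnegative sum of a $\lambda^{3/2}\operatorname{Var}_\bx(\bxi)$-type term and a $\dot\lambda^2/\partial_{\lambda'}F$-type term, and every piece of the derivative of $\bJ^x$ is bounded by the square root of these.

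For the finite-step estimates \eqref{eq:filter-mass-stability}--\eqref{eq:filter-directed-comparability}, I integrate along $\btheta+s\be$ for $s\in[0,1]$.
\begin{itemize}
\item For \eqref{eq:filter-mass-stability}, bound $|\ud\log\lambda/\ud s|\le|\dot\lambda|/\lambda\le\Xi_m\nbr\be_\infty$ from the same implicit-derivative estimate.
\item Consequently, with $\Xi_m\nbr\be_\infty\le 1/100$, all $\lambda$ along the segment lie within a factor $e^{0.01}$ of each other. This lets me replace them by $\widehat\lambda$ at a cost of small constants.
\item For \eqref{eq:lifted-local-smoothness}, write $\inner{\be}{\by(\btheta')-\by(\btheta)}=\int_0^1\inner{\be}{\bJ_{\by}[\be]}\,\ud s$ and bound the Hessian form of $\widetilde\psi_m^*$ by $O(\Xi_m\lambda^{3/2})\nbr\be_\infty^2$, which follows from the first Jacobian estimate since $\bJ_\by=\lambda\bJ^x+\bx\,\nabla\lambda^\top$.
\item For \eqref{eq:lifted-local-norm-bound} and \eqref{eq:filter-directed-comparability}, use the conjugate duality $D_{\widetilde\psi}(\bmu',\bmu)=D_{\widetilde\psi^*}(\btheta,\btheta')$. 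Express both Bregman divergences as integrals of the Hessian form of $\widetilde\psi^*$ with weights $s$ and $1-s$. On the segment the Hessian is comparable up to constant factors at different $s$; I would control its variation via \eqref{eq:filter-jacobian-variation} and the smallness of $\be$. This yields $D^{\rm sym}\le4\min$. Finally, $\nbr{\bx'-\bx}_1^2\le\bigl(\int\nbr{\bJ^x[\be]}_1\bigr)^2$, and $\nbr{\bJ^x[\be]}_1^2$ is bounded by $O(\Xi_m/\sqrt\lambda)$ times the Hessian form, via the same Cauchy--Schwarz used for \eqref{eq:filter-jacobian-size}.
\end{itemize}

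I expect the main obstacle to be the second-derivative bound \eqref{eq:filter-jacobian-variation}. Third-moment (skewness) terms of $\log x(a)$ under $\bx$ appear there and are not directly controlled by $\Gamma_m$. I would first try to bound them using $|\log x(a)|$ weighted by $x(a)$ together with the explicit lower bound on $\partial_{\lambda'}F$, absorbing the leftover into the large constant $10^4$ in $\Xi_m$. If that fails, I would bound each term only by $\sqrt{\text{Hessian form}}\cdot\nbr\bxi_\infty$, which is the form actually needed downstream.
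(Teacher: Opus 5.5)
\paragraph{The gap is in your argument for \Cref{eq:filter-directed-comparability}.} You need $\max_{t\in[0,1]}\mathcal H(t)\leq2\min_{t\in[0,1]}\mathcal H(t)$ for $\mathcal H(t)\coloneqq\inner{\be}{\bJ_{\by_m}(\btheta+t\be)\be}$, and you propose to get it from \Cref{eq:filter-jacobian-variation}. That estimate is an $\infty\to1$ operator-norm bound. Putting $\be$ on both sides therefore only yields $\abr{\lambda\,\be^\top\tfrac{\ud}{\ud t}\bJ_m^x(\btheta+t\be)\,\be}\leq\lambda\Xi_m\nbr\be_\infty^2\sqrt{\mathcal H(t)}$. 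This error is additive in $\sqrt{\mathcal H}$, not multiplicative in $\mathcal H$. Moreover, $\mathcal H=\lambda^{3/2}\operatorname{Var}_{\bx}(\be)+\tfrac12(\partial_{\lambda'}F)\dot\lambda^2$ can be arbitrarily small relative to $\lambda^2\nbr\be_\infty^2$. For example, if $\be$ equals $c$ at one action $a$ and $0$ elsewhere, then $\mathcal H\approx\lambda^{3/2}c^2x(a)$. The relative error is then $\approx\lambda^{1/4}\Xi_m\abr c/\sqrt{x(a)}$, which is unbounded as $x(a)\to0$, however small $\Xi_m\nbr\be_\infty$ is.

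What is missing is an \emph{aligned} third-order estimate $\abr{\tfrac{\ud}{\ud t}\log\mathcal H(t)}\leq\Xi_m\nbr\be_\infty$, obtained by differentiating $\mathcal H$ itself rather than the operator $\bJ_m^x$. Split $\mathcal H=\mathcal T+\mathcal S$ with $\mathcal T=\lambda^{3/2}\sum_ax(a)e^\circ(a)^2$, $e^\circ\coloneqq\be-\inner{\be}{\bx}\one$, and $\mathcal S=\tfrac12(\partial_{\lambda'}F)\dot\lambda^2$. Use $\dot x(a)/x(a)=\sqrt\lambda\,e^\circ(a)+\tfrac{\dot\lambda}{2\lambda}\iota(a)$ with $\iota(a)\coloneqq\log x(a)-\sum_bx(b)\log x(b)$. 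The cubic term $\lambda^2\sum_ax(a)e^\circ(a)^3$ is at most $2\sqrt\lambda\nbr\be_\infty\mathcal T$, because $\abr{e^\circ(a)}\leq2\nbr\be_\infty$ is applied \emph{inside} the $\bx$-weighted sum. The cross terms in $\dot\lambda/\lambda$ and $\iota$ are handled by Cauchy--Schwarz against $\mathcal T$ and $\mathcal S$ jointly. This is what the paper does.

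\paragraph{Consequences and smaller points.} Your route to \Cref{eq:lifted-local-norm-bound} inherits this gap. Your bound $\nbr{\bJ_m^x(\btheta)\be}_1^2=O(\mathcal H/\sqrt\lambda)$ is correct, but it only controls $\sqrt{\widehat\lambda}\nbr{\bx_m(\btheta')-\bx_m(\btheta)}_1^2$ by $D^{\rm sym}_{\widetilde\psi_m}(\bmu',\bmu)$. Passing to the smaller directed divergence needs the comparability. The paper avoids this dependence by working on the primal segment $\bnu(t)=(1-t)\bmu+t\bmu'$, with $\bmu=(\lambda_0,\lambda_0\bx_0)$ and $\bmu'=(\lambda_1,\lambda_1\bx_1)$. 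Along $\bnu=(\lambda,\lambda\bx)$ there is the coercivity bound $\dot\bnu^\top\nabla^2\widetilde\psi_m(\bnu)\dot\bnu\geq\tfrac12\sqrt{\lambda}\sum_a\dot x(a)^2/x(a)$, and the identity $\dot\bx(t)=\tfrac{\lambda_0\lambda_1}{\lambda(t)^2}(\bx_1-\bx_0)$ holds. Together these lower-bound each directed divergence directly by Taylor's formula in either orientation.

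Two smaller points concern \Cref{eq:filter-jacobian-variation}.
\begin{itemize}
\item The higher moments you worry about are controlled by the tail bound behind $\Gamma_m$. Indeed, $\Pr_{a\sim\bx}(-\log x(a)\geq u)\leq\min\cbr{1,me^{-u}}$ gives $\sum_ax(a)\iota(a)^4\leq64\Gamma_m^2$, which is exactly what Cauchy--Schwarz against $\sum_a\dot x(a)^2/x(a)$ needs.
\item Your fallback $\sqrt{\mathcal H}\,\nbr\bxi_\infty$ has the wrong homogeneity in $\bxi$, so it would not feed \Cref{eq:consecutive-secant-variation}.
\end{itemize}
Also, the radial part of $\inner{\bxi}{\dot\by}$ is $\tfrac12(\partial_{\lambda'}F)\dot\lambda^2$, not $\dot\lambda^2/\partial_{\lambda'}F$. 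With the inverted weight, terms such as $\tfrac{\dot\lambda}{2\lambda}\bJ_m^x$ could not be absorbed into $\sqrt{\mathcal H}$.
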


The proof is deferred to \Cref{sec:power-local-response-geometry}.

For a filter
$\mathsf K=\sum_{s\geq0}k_s\mathsf L^s$, write
\begin{align*}
 \mathsf K(\varsigma)=\sum_{s\geq0}k_s\varsigma^s,
 \qquad
 \operatorname{ker}(\mathsf K)=(k_s)_{s\geq0}.
\end{align*}

\begin{lemma}
\label{lem:filter-kernel-bounds}
Set $\mathsf K_\star\coloneqq\mathsf A-\delta^{N+1}$, and, for every integer
$1\leq h\leq N+1$, set
$\mathsf K_h\coloneqq\mathsf A\delta^{h-1}$.  For every integer
$1\leq h\leq N+1$, the filters in \Cref{eq:stable-filter-law} satisfy
\begin{align}
 \delta=&\mathsf A\mathsf D,
 \qquad
 \mathsf K_h\mathsf D=\delta^h,
 \label{eq:filter-factorization}\\
 \mathsf K_h(I-\mathsf D\delta^N)=&
 \mathsf K_\star\delta^{h-1}.
 \label{eq:ema-controller-identity}
\end{align}
For every integer $1\leq h\leq N+1$,
$\mathsf K\in\cbr{\delta^h,\mathsf K_h,\mathsf K_\star}$, and
$0\leq\epsilon\leq1/(8N)$, if
$\operatorname{ker}(\mathsf K)=(k_s)_{s\geq0}$, then
\begin{align}
 \sum_{s\geq0}e^{\epsilon s/2}(s+1)^4\abr{k_s}
 \leq2^{28}N^6.
 \label{eq:filter-fourth-moment}
\end{align}
For every integer $1\leq h\leq N+1$ and
$\mathsf K\in\cbr{\delta^h,\mathsf K_h,\mathsf K_\star}$, the $\ell_1$
norm and the first four absolute moments of $\operatorname{ker}(\mathsf K)$
are at most $2^{28}N^6$.  For every integer $1\leq h\leq N+1$, the
coefficients of $\delta^h$ satisfy
$\sum_{s\geq0}\operatorname{ker}(\delta^h)_s=0$.

For every integer $1\leq h\leq N+1$ and
$\mathsf K\in\cbr{\delta^h,\mathsf K_h,\mathsf K_\star}$, consider in any
normed space two inputs to $\mathsf K$ that agree at positive times and are
equal, respectively, to a fixed vector $\boldsymbol\varphi$ and
$\boldsymbol 0$ at every nonpositive time.  If their output difference is
$\boldsymbol\tau$, then
\begin{align}
 \sum_{t=1}^{\infty}\nbr{\boldsymbol\tau^{(t)}}^2
 \leq2^{59}N^{13}\nbr{\boldsymbol\varphi}^2.
 \label{eq:filter-startup-tail}
\end{align}
\end{lemma}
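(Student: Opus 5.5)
The plan has three parts: verify the algebraic identities as formal power series in $\mathsf L$, bound the kernels through a weighted $\ell_1$ norm that is submultiplicative, and derive the startup tail from the decay of those kernels.

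\emph{Algebraic identities.} All filters are causal power series in $\mathsf L$, so they commute. The relation $\delta=\mathsf A\mathsf D$ is the definition. Hence
\[
\mathsf K_h\mathsf D=\mathsf A\delta^{h-1}\mathsf D=\delta^{h-1}\mathsf A\mathsf D=\delta^h .
\]
For \eqref{eq:ema-controller-identity}, expand
\[
\mathsf K_h(I-\mathsf D\delta^N)=\mathsf A\delta^{h-1}-\mathsf A\mathsf D\delta^{h-1}\delta^N=(\mathsf A-\delta^{N+1})\delta^{h-1}.
\]
For the zero-sum property, note that $\delta(\varsigma)=(1-\varsigma)/(1-\rho\varsigma)$ vanishes at $\varsigma=1$. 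The series for $\delta^h$ converges absolutely there, so $\sum_s\operatorname{ker}(\delta^h)_s=\delta(1)^h=0$.

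\emph{Kernel bounds.} Introduce the weighted norm $\nbr{\mathsf K}_w\coloneqq\sum_s w_s\abr{k_s}$ with $w_s=e^{\epsilon s/2}(s+1)^4$. This norm is submultiplicative up to a constant, because $w_{s+s'}\le w_s w_{s'}$; indeed $(s+s'+1)\le(s+1)(s'+1)$. Submultiplicativity alone would give exponential growth in $h$, so I will not take $h$-fold products. Instead I will compute the kernels explicitly. Write
\[
\delta=\rho^{-1}\bigl(I-(1-\rho)\mathsf A\bigr)=\rho^{-1}I-\rho^{-1}(1-\rho)\mathsf A .
\]
Equivalently, $\delta(\varsigma)=1-(1-\rho)\varsigma/(1-\rho\varsigma)$. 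This gives
\[
\delta^h=\sum_{j=0}^h\binom hj(-1)^j(1-\rho)^j\mathsf L^j\mathsf A^j .
\]
The series $\mathsf L^j\mathsf A^j$ has coefficients $\binom{s-1}{j-1}\rho^{s-j}$ for $s\ge j$. Cancellations between the terms of this sum are what keep the $\ell_1$ norm bounded. A cleaner route is the Laguerre-type representation: $\ker(\delta^h)_s=\rho^s P_h(s)$, where $P_h$ is a polynomial of degree $h$ in $s$ with $|P_h(s)|\lesssim$ polynomial bounds in $s$ and $N$. To bound the $\ell_1$ norm I will use a Cauchy-coefficient estimate on the circle $|\varsigma|=r$. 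Take $r=1+c/N$, so that $\epsilon$ and $(s+1)^4$ are absorbed by $r^{-s}$ against $e^{\epsilon s/2}$, using $\epsilon\le1/(8N)$. On that circle, with $\rho=1-1/N$, one has $|1-\rho\varsigma|\ge 1-\rho r\ge c'/N$, and I will check that $|\delta(\varsigma)|\le 1+O(1/N)$. Then $|\delta^h|\le e^{O(1)}$ on the circle, $|\mathsf A|\le N/c'$, and $|\mathsf K_\star|$ is bounded similarly. Cauchy's bound gives $|k_s|\le \sup_{|\varsigma|=r}|\mathsf K|\,r^{-s}$. Summing $e^{\epsilon s/2}(s+1)^4 r^{-s}$ over $s$ gives $O(N^5)$, since $r e^{-\epsilon/2}\ge 1+c''/N$. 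Multiplying by $\sup|\mathsf K|=O(N)$ yields $O(N^6)$, which matches the target $2^{28}N^6$. The $\ell_1$ norm and the lower moments are dominated by the $w$-weighted sum because $w_s\ge1$.

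\emph{Main obstacle.} The main difficulty is showing $|\delta(\varsigma)|\le1+C/N$ uniformly on $|\varsigma|=1+c/N$; this is where the constant choice $c$ matters. I will write $\varsigma=re^{i\phi}$ and bound $|1-\varsigma|^2/|1-\rho\varsigma|^2$ directly. For small $|\phi|$, both moduli are of order $|\phi|+1/N$, but the numerator gains nothing, so I need $|1-\varsigma|\le|1-\rho\varsigma|(1+C/N)$. This follows from
\[
|1-\varsigma|-|1-\rho\varsigma|\le(1-\rho)r=r/N
\]
together with $|1-\rho\varsigma|\gtrsim1/N$. Since $h\le N+1$, the estimate $(1+C/N)^{h}=O(1)$ makes this uniform in $h$.

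\emph{Startup tail.} The two inputs differ by the sequence $\boldsymbol\varphi\mathbb 1_{t\le0}$, so
\[
\boldsymbol\tau^{(t)}=\Bigl(\sum_{s\ge t}k_s\Bigr)\boldsymbol\varphi .
\]
Therefore $\sum_{t\ge1}\nbr{\boldsymbol\tau^{(t)}}^2\le\nbr{\boldsymbol\varphi}^2\sum_{t\ge1}\bigl(\sum_{s\ge t}|k_s|\bigr)^2$. Apply Cauchy--Schwarz to the inner sum:
\[
\Bigl(\sum_{s\ge t}|k_s|\Bigr)^2\le\Bigl(\sum_s(s+1)^2|k_s|\Bigr)\Bigl(\sum_{s\ge t}|k_s|/(s+1)^2\Bigr).
\]
Summing over $t$ bounds the total by $\sum_s(s+1)^2|k_s|\cdot\sum_s|k_s|$, which is at most $(2^{28}N^6)^2\le2^{59}N^{13}$ after using the moment bounds.
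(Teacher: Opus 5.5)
Your overall plan matches the paper's. The algebraic identities and the zero-sum property are handled the same way, and the kernel bounds come from Cauchy estimates on a circle $|\varsigma|=r=1+c/N$. Your startup-tail argument, via $\sum_{t\ge1}\bigl(\sum_{s\ge t}|k_s|\bigr)^2\le\bigl(\sum_s(s+1)^2|k_s|\bigr)\bigl(\sum_s|k_s|\bigr)\le 2^{56}N^{12}$, is valid and a factor $N$ sharper than the paper's exponential-decay argument.

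\textbf{The gap.} It is in the step you flag as the main obstacle, and that step is the crux of the lemma. From $|1-\varsigma|-|1-\rho\varsigma|\le r/N$ you get $|\delta(\varsigma)|\le 1+(r/N)/|1-\rho\varsigma|$. Near $\varsigma=r$ the denominator is only $1-\rho r=(1-c)/N+c/N^2$, so this yields $|\delta(\varsigma)|\le 1+r/(1-c+c/N)$. That is a constant strictly above $1$, not $1+C/N$. Raised to the power $h\le N+1$, it is exponential in $N$, which is exactly the $2^N$-type amplification the EMA preconditioning is meant to avoid. A symptom is that your argument never uses the size of $c$, yet the claim is false for $c>1/2$. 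At $\varsigma=r$ one has $|\delta(r)|=(r-1)/(1-\rho r)=c/(1-c+c/N)$, which exceeds $1$ for large $N$.

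\textbf{The repair.} You need the exact modulus computation rather than the triangle inequality. For $\varsigma=re^{\mathrm i\phi}$,
\[
|1-\varsigma|^2-|1-\rho\varsigma|^2=r(1-\rho)\bigl[r(1+\rho)-2\cos\phi\bigr].
\]
With $r=1+c/N$ and $c\le 1/2$, the bracket equals $2(1-\cos\phi)-(1-2c)/N-c/N^2\le 2(1-\cos\phi)$. Meanwhile $|1-\rho\varsigma|^2=(1-\rho r)^2+2\rho r(1-\cos\phi)\ge 2\rho r(1-\cos\phi)$. Hence $|\delta(\varsigma)|^2\le 1+1/(\rho N)\le 1+2/N$, and $|\delta|^h\le e^{3/2}$ uniformly for $h\le N+1$.

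The paper does the equivalent thing. It observes that $|\delta|^2$ is monotone in $\cos\phi$, with derivative of the sign of $\rho r^2-1$. So for $r=1+1/(4N)$ the maximum is at $\varsigma=-r$, where $|\delta|=(1+r)/(1+\rho r)\le 1+3/(4N)$.

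This is where the choice of $c$ actually enters. You need $\rho r^2\le 1$ (roughly $c\le1/2$) for the bound on $\delta$. You also need $c$ comfortably above $1/16$, so that $re^{-\epsilon/2}\ge 1+\Omega(1/N)$ for $\epsilon\le 1/(8N)$. With this step repaired, the rest of your Cauchy-estimate argument goes through.
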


\begin{lemma}
\label{lem:slow-weight-convolution}
Fix $T\geq1$ and $\epsilon\geq0$.  Let $\bz$ take values in a normed
space, with $\bz^{(t)}=\boldsymbol 0$ for $t\leq0$, and let
$\mathsf K=\sum_{s\geq0}k_s\mathsf L^s$ satisfy
$\sum_{s\geq0}e^{\epsilon s/2}\abr{k_s}<\infty$.
Suppose positive weights $w^{(t)}$ are defined for every integer $t\leq T$
and satisfy
\begin{align*}
 w^{(t)}\leq e^{\epsilon s}w^{(t-s)}
 \qquad(t\in[T],\ s\geq0).
\end{align*}
Then
\begin{align*}
 \sum_{t=1}^Tw^{(t)}\nbr{\rbr{\mathsf K\bz}^{(t)}}^2
 \leq\rbr{\sum_{s\geq0}e^{\epsilon s/2}\abr{k_s}}^2
 \sum_{t=1}^Tw^{(t)}\nbr{\bz^{(t)}}^2.
\end{align*}
\end{lemma}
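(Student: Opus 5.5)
The plan is a weighted Cauchy--Schwarz (Jensen-type) argument over the kernel, using $\sqrt{e^{\epsilon s}}$ to split the weights. Since $\bz^{(t)}=\boldsymbol 0$ for $t\leq0$, for every $t\in[T]$ we have
\begin{align*}
 \rbr{\mathsf K\bz}^{(t)}=\sum_{s=0}^{t-1}k_s\bz^{(t-s)},
\end{align*}
so only the weights $w^{(t-s)}$ with $t-s\in[T]$ ever enter. Set $\kappa\coloneqq\sum_{s\geq0}e^{\epsilon s/2}\abr{k_s}$, which is finite by hypothesis; if $\kappa=0$ there is nothing to prove.

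First, I apply the triangle inequality and then Cauchy--Schwarz. I write $\abr{k_s}=\rbr{\abr{k_s}e^{\epsilon s/2}}^{1/2}\cdot\rbr{\abr{k_s}e^{-\epsilon s/2}}^{1/2}$ and pair the second factor with $\nbr{\bz^{(t-s)}}$. This gives
\begin{align*}
 \nbr{\rbr{\mathsf K\bz}^{(t)}}^2
 \leq\kappa\sum_{s=0}^{t-1}\abr{k_s}e^{-\epsilon s/2}\nbr{\bz^{(t-s)}}^2.
\end{align*}

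Second, I multiply by $w^{(t)}$ and use the slow-variation hypothesis $w^{(t)}\leq e^{\epsilon s}w^{(t-s)}$, which yields
\begin{align*}
 w^{(t)}e^{-\epsilon s/2}\leq e^{\epsilon s/2}w^{(t-s)}.
\end{align*}
Summing over $t\in[T]$ and exchanging the order of summation, with $u=t-s\in[T]$, gives
\begin{align*}
 \sum_{t=1}^Tw^{(t)}\nbr{\rbr{\mathsf K\bz}^{(t)}}^2
 \leq\kappa\sum_{s\geq0}\abr{k_s}e^{\epsilon s/2}\sum_{u=1}^{T}w^{(u)}\nbr{\bz^{(u)}}^2
 =\kappa^2\sum_{u=1}^Tw^{(u)}\nbr{\bz^{(u)}}^2.
\end{align*}
For each fixed $s$, the range of $u$ is only a subset of $[1,T-s]\subseteq[T]$, and every term is nonnegative, so extending it to all of $[T]$ can only increase the sum.

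The only delicate point is choosing the split exponent. With the symmetric split $e^{\pm\epsilon s/2}$, the factor $e^{\epsilon s}$ from the weight hypothesis is exactly balanced, so each of the two factors of $\kappa$ carries $e^{\epsilon s/2}$, matching the stated constant. Absolute convergence of all the series, guaranteed by $\kappa<\infty$ and the finitely many $t$, justifies the interchange. No earlier results are needed.
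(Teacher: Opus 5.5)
Your proof is correct and is essentially the paper's argument. The paper first moves the weight inside pointwise via $\sqrt{w^{(t)}}\leq e^{\epsilon s/2}\sqrt{w^{(t-s)}}$ and then cites the $\ell_1*\ell_2\to\ell_2$ Young inequality; you instead prove that Young step by hand (Cauchy--Schwarz with the $e^{\pm\epsilon s/2}$ split, then interchanging the finite sums) and apply the weight comparison after Cauchy--Schwarz rather than before, which gives the same constant $\bigl(\sum_{s\geq0}e^{\epsilon s/2}\abr{k_s}\bigr)^2$.
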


\begin{lemma}
\label{lem:filter-score-locality}
For every player $i\in[N]$ and time $t\geq1$,
\begin{align}
 \nbr{\be_i^{(t)}}_\infty\leq3\cdot2^{28}N^6,
 \qquad
 \nbr{\rbr{(I-\mathsf D\delta^N)\bg_i}^{(t)}}_\infty
 \leq 3\cdot2^{28}N^6.
 \label{eq:filter-pointwise-score-bound}
\end{align}
If $3\cdot2^{28}\gamma N^6\leq1/100$, then, for every player $i\in[N]$
and time $t\geq1$, the two scores in \Cref{eq:lifted-oftrl-update} at times
$t$ and $t+1$ satisfy the hypothesis of
\Cref{lem:square-root-entropy-geometry}, and
\begin{align}
 \abr{\log\frac{\lambda_i^{(t+1)}}{\lambda_i^{(t)}}}
 \leq3\cdot2^{28}\gamma N^6.
 \label{eq:played-mass-drift}
\end{align}
\end{lemma}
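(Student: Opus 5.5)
The plan is to derive both pointwise bounds in \Cref{eq:filter-pointwise-score-bound} from the kernel $\ell_1$ estimates of \Cref{lem:filter-kernel-bounds}. We use that $\bg_i$ vanishes at nonpositive times and satisfies $\nbr{\bg_i^{(s)}}_\infty\le1$ for $s\ge1$. The drift estimate \Cref{eq:played-mass-drift} then follows from \Cref{eq:filter-mass-stability}, once we show that consecutive scores differ by a vector whose $\ell_\infty$ norm is at most $\eta$ times such a bound.

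\textbf{Step 1: the prediction error.} Since $\be_i=\delta^N\bg_i$, write $\operatorname{ker}(\delta^N)=(k_s)_{s\geq 0}$. Then
\begin{align*}
 \nbr{\be_i^{(t)}}_\infty\le\sum_{s\ge0}\abr{k_s}\nbr{\bg_i^{(t-s)}}_\infty\le\sum_s\abr{k_s}\le2^{28}N^6 ,
\end{align*}
using the $\ell_1$ bound of \Cref{lem:filter-kernel-bounds} with $h=N$. This is already within $3\cdot2^{28}N^6$.

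\textbf{Step 2: the second filtered sequence.} For $(I-\mathsf D\delta^N)\bg_i=\bg_i-\delta^{N+1}\mathsf A^{-1}\ldots$, it is simpler to expand directly: $\mathsf D\delta^N\bg_i=\delta^N\bg_i-\mathsf L\delta^N\bg_i$. Hence
\begin{align*}
 \nbr{((I-\mathsf D\delta^N)\bg_i)^{(t)}}_\infty\le\nbr{\bg_i^{(t)}}_\infty+\nbr{\be_i^{(t)}}_\infty+\nbr{\be_i^{(t-1)}}_\infty\le1+2\cdot2^{28}N^6\le3\cdot2^{28}N^6 .
\end{align*}

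\textbf{Step 3: the score increment.} The scores in \Cref{eq:lifted-oftrl-update} are $\btheta^{(t)}=\eta(\bS_i^{(t-1)}+\widehat\bg_i^{(t)})$. Their difference is
\begin{align*}
 \btheta^{(t+1)}-\btheta^{(t)}=\eta\bigl(\bg_i^{(t)}+\widehat\bg_i^{(t+1)}-\widehat\bg_i^{(t)}\bigr).
\end{align*}
Since $\widehat\bg_i=\bg_i-\be_i$, this equals $\eta(\bg_i^{(t+1)}-\be_i^{(t+1)}+\be_i^{(t)})$, which is exactly $\eta\,((I-\mathsf D\delta^N)\bg_i)^{(t+1)}$.

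By Step 2, $\nbr{\btheta^{(t+1)}-\btheta^{(t)}}_\infty\le3\cdot2^{28}\eta N^6$. Multiplying by $\Xi_{m_i}\le\Xi_{\max}$ gives $\Xi_{m_i}\nbr{\be}_\infty\le3\cdot2^{28}\gamma N^6\le1/100$. This is the hypothesis of \Cref{lem:square-root-entropy-geometry}. \Cref{eq:filter-mass-stability} then yields $\abr{\log(\lambda_i^{(t+1)}/\lambda_i^{(t)})}\le3\cdot2^{28}\gamma N^6$.

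The only delicate point is bookkeeping in Step 3: checking that the increment is exactly the $(t+1)$-st entry of $(I-\mathsf D\delta^N)\bg_i$, including at $t=1$ where $\bS_i^{(0)}=\boldsymbol0$ and $\widehat\bg_i^{(1)}=\boldsymbol0$. The convention $\bg_i^{(s)}=\boldsymbol0$ for $s\le0$ handles this uniformly, so I expect no real obstacle beyond citing the kernel bound.
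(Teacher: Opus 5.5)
Your proposal is correct and follows the paper's proof essentially step for step. The $\ell_1$ kernel bound for $\delta^N$ gives $\nbr{\be_i^{(t)}}_\infty\le2^{28}N^6$. The identity $\rbr{(I-\mathsf D\delta^N)\bg_i}^{(t)}=\bg_i^{(t)}-\be_i^{(t)}+\be_i^{(t-1)}$ gives the second bound; the paper writes this as $\widehat\bg_i^{(t)}+\be_i^{(t-1)}$. The score increment is $\eta\rbr{(I-\mathsf D\delta^N)\bg_i}^{(t+1)}$, which, after multiplying by $\Xi_{m_i}\le\Xi_{\max}$, is exactly what the paper feeds into \Cref{eq:filter-mass-stability}. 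The only blemish is the abandoned fragment at the start of Step 2, which you can simply delete.
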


For every player $i\in[N]$ and time $t\geq1$, define
$\btheta_i^{(t)}$, and for every player $i\in[N]$, define $\bd_i$ by
\begin{align*}
 \btheta_i^{(t)}
 \coloneqq\eta\rbr{\bS_i^{(t-1)}+\widehat\bg_i^{(t)}},
 \qquad
 \bd_i\coloneqq(I-\mathsf D\delta^N)\bg_i.
\end{align*}
For every player $i\in[N]$ and integer $t\leq0$, set
$\btheta_i^{(t)}=\boldsymbol 0$.  For every player $i\in[N]$, both
$\btheta_i^{(1)}-\btheta_i^{(0)}$ and $\bd_i^{(1)}$ vanish.  For every
$i\in[N]$ and integer $t\geq2$,
\begin{align*}
 \btheta_i^{(t)}-\btheta_i^{(t-1)}
 =\eta\rbr{\bg_i^{(t-1)}+\widehat\bg_i^{(t)}
 -\widehat\bg_i^{(t-1)}}
 =\eta\rbr{\widehat\bg_i^{(t)}+\be_i^{(t-1)}}
 =\eta\bd_i^{(t)}.
\end{align*}
For every player $i\in[N]$ and time $t\geq1$, define
\begin{align*}
 \overline{\bJ}_i^{(t)}
 \coloneqq\eta\int_0^1\bJ_{m_i}^x\rbr{
 \btheta_i^{(t-1)}+\alpha\eta\bd_i^{(t)}}\ud\alpha.
\end{align*}
For every $i\in[N]$ and integer $t\leq0$, set
$\overline{\bJ}_i^{(t)}=\eta\bJ_{m_i}^x(\boldsymbol 0)$.  The fundamental
theorem of calculus gives, for every $i\in[N]$ and integer $t$,
\begin{align}
 \rbr{\mathsf D\bx_i}^{(t)}
 =\overline{\bJ}_i^{(t)}\bd_i^{(t)}.
 \label{eq:direct-action-path}
\end{align}

\begin{lemma}
\label{lem:response-secant-estimates}
Suppose $3\cdot2^{28}\gamma N^6\leq1/100$.  For every $i\in[N]$ and
$t\geq1$,
\begin{align}
 \nbr{\overline{\bJ}_i^{(t)}}_{\infty\to1}
 \leq&4\sqrt2\eta\Xi_i\sqrt{\lambda_i^{(t)}}
 \leq4\sqrt2\gamma\sqrt{\lambda_i^{(t)}},
 \label{eq:scaled-secant-size}\\
 \sqrt{\lambda_i^{(t)}}
 \nbr{\rbr{\mathsf D\bx_i}^{(t)}}_1^2
 \leq&16\sqrt2\Xi_i
 D_{\widetilde\psi_i}\rbr{\bmu_i^{(t)},\bmu_i^{(t-1)}},
 \label{eq:movement-path-bound}\\
 \nbr{\rbr{\mathsf D\bx_i}^{(t)}}_1
 \leq&3\sqrt2\cdot2^{30}\eta\Xi_iN^6\sqrt{\lambda_i^{(t)}}
 \leq3\sqrt2\cdot2^{30}\gamma N^6\sqrt{\lambda_i^{(t)}}.
 \label{eq:pointwise-action-movement}
\end{align}
For every $i\in[N]$ and horizon $T\geq1$,
\begin{align}
 \sum_{t=1}^T
 \nbr{\rbr{\mathsf D\overline{\bJ}_i}^{(t)}}_{\infty\to1}^2
 \leq16\eta^2\Xi_i^2\mathcal P_{i,T}.
 \label{eq:consecutive-secant-variation}
\end{align}
\end{lemma}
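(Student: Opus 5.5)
We prove the four estimates of \Cref{lem:response-secant-estimates} in order, each reducing to \Cref{lem:square-root-entropy-geometry} along the segment from $\btheta_i^{(t-1)}$ to $\btheta_i^{(t)}=\btheta_i^{(t-1)}+\eta\bd_i^{(t)}$, with \Cref{lem:filter-score-locality} controlling step sizes.

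\textbf{Step 1: \Cref{eq:scaled-secant-size}.} For $t=1$ the segment is the single point $\boldsymbol 0$ and $\lambda_i^{(1)}=\lambda_i^{(0)}$, so \Cref{eq:filter-jacobian-size} applies directly. For $t\geq2$, fix $\alpha\in[0,1]$ and set $\btheta_\alpha\coloneqq\btheta_i^{(t-1)}+\alpha\eta\bd_i^{(t)}$. By \Cref{eq:filter-pointwise-score-bound}, $\Xi_i\nbr{\btheta_\alpha-\btheta_i^{(t)}}_\infty\leq\gamma\nbr{\bd_i^{(t)}}_\infty\leq3\cdot2^{28}\gamma N^6\leq1/100$. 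Then \Cref{eq:filter-mass-stability} gives $\lambda_{m_i}(\btheta_\alpha)\leq e^{1/100}\lambda_i^{(t)}\leq2\lambda_i^{(t)}$. Combining this with \Cref{eq:filter-jacobian-size} gives $\nbr{\bJ^x_{m_i}(\btheta_\alpha)}_{\infty\to1}\leq4\sqrt2\,\Xi_i\sqrt{\lambda_i^{(t)}}$. Integrating in $\alpha$, multiplying by $\eta$, and using $\eta\Xi_i\leq\gamma$ gives the claim.

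\textbf{Step 2: \Cref{eq:movement-path-bound,eq:pointwise-action-movement}.} For \Cref{eq:movement-path-bound}, take $\btheta=\btheta_i^{(t-1)}$ and $\be=\eta\bd_i^{(t)}$ in \Cref{eq:lifted-local-norm-bound}. The hypothesis $\Xi_i\nbr\be_\infty\leq1/100$ holds by the computation above. Since $\widehat\lambda\geq\lambda_i^{(t)}$, we may replace $\widehat\lambda$ by $\lambda_i^{(t)}$, and the minimum is at most $D_{\widetilde\psi_i}(\bmu_i^{(t)},\bmu_i^{(t-1)})$. The case $t=1$ is trivial because both sides vanish.

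For \Cref{eq:pointwise-action-movement}, use \Cref{eq:direct-action-path} with Step 1:
\[
\nbr{(\mathsf D\bx_i)^{(t)}}_1\leq4\sqrt2\eta\Xi_i\sqrt{\lambda_i^{(t)}}\cdot3\cdot2^{28}N^6=3\sqrt2\cdot2^{30}\eta\Xi_iN^6\sqrt{\lambda_i^{(t)}}.
\]

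\textbf{Step 3: \Cref{eq:consecutive-secant-variation}, the main obstacle.} Write
\[
\overline{\bJ}_i^{(t)}-\overline{\bJ}_i^{(t-1)}=\eta\int_0^1\bigl[\bJ^x(\btheta_\alpha^{(t)})-\bJ^x(\btheta_\alpha^{(t-1)})\bigr]\ud\alpha,
\]
where $\btheta_\alpha^{(t)}\coloneqq(1-\alpha)\btheta_i^{(t-1)}+\alpha\btheta_i^{(t)}$. Hence
\[
\btheta_\alpha^{(t)}-\btheta_\alpha^{(t-1)}=(1-\alpha)\eta\bd_i^{(t-1)}+\alpha\eta\bd_i^{(t)},
\]
a convex combination of the two consecutive score increments.

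I would first try to avoid this mixed path. Bound each Jacobian difference by a telescoping detour through the grid points $\btheta_i^{(t-2)},\btheta_i^{(t-1)},\btheta_i^{(t)}$. Along each straight segment $[\btheta_i^{(s-1)},\btheta_i^{(s)}]$ with direction $\bxi=\eta\bd_i^{(s)}$, \Cref{eq:filter-jacobian-variation} gives
\[
\nbr{\bJ^x(\text{end})-\bJ^x(\text{start})}_{\infty\to1}\leq\Xi_i\int_0^1\sqrt{\inner{\bxi}{\tfrac{\ud}{\ud r}\by(\cdot+r\bxi)}}\ud r.
\]
By Cauchy--Schwarz, the square of this is at most $\Xi_i^2\inner{\bxi}{\by(\text{end})-\by(\text{start})}$. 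Since $\bxi=\nabla$ of the conjugate increment, that inner product equals $D^{\rm sym}_{\widetilde\psi_i}(\bmu_i^{(s)},\bmu_i^{(s-1)})$. Then \Cref{eq:filter-directed-comparability} bounds this by $4D_{\widetilde\psi_i}(\bmu_i^{(s)},\bmu_i^{(s-1)})$.

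The function $\alpha\mapsto\bJ^x(\btheta_\alpha^{(t)})-\bJ^x(\btheta_\alpha^{(t-1)})$ is controlled by the two adjacent segments $s=t-1,t$. Using $(a+b)^2\leq2a^2+2b^2$ and summing over $t$, each $D$ term is counted at most twice. This should give the constant $16\eta^2\Xi_i^2\mathcal P_{i,T}$ after checking factors; the boundary $t\leq1$ contributes zero.

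The delicate point is the telescoping itself. Comparing $\bJ^x$ at $\btheta_\alpha^{(t)}$ with $\bJ^x$ at $\btheta_\alpha^{(t-1)}$ must be done along the two grid segments without extra factors. If this fails, I would instead apply \Cref{eq:filter-jacobian-variation} on the straight path between the two mixed points, using convexity of $\inner{\bxi}{\ud\by}$-type quantities to dominate it by the endpoint segments.
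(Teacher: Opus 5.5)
Your proposal is correct and is essentially the paper's proof: Steps 1 and 2 match it, and your pointwise-in-$\alpha$ detour through $\btheta_i^{(t-1)}$ in Step 3 is the paper's splitting of $(\mathsf D\overline{\bJ}_i)^{(t)}$ at the pivot $\eta\bJ_{m_i}^x(\btheta_i^{(t-1)})$, written before exchanging the order of integration. The telescoping costs no extra factor, because the integrand in \Cref{eq:filter-jacobian-variation} is nonnegative and so each sub-segment variation is at most the full-segment one; this gives $\nbr{(\mathsf D\overline{\bJ}_i)^{(t)}}_{\infty\to1}^2\leq8\eta^2\Xi_i^2\bigl[D_{\widetilde\psi_i}(\bmu_i^{(t)},\bmu_i^{(t-1)})+D_{\widetilde\psi_i}(\bmu_i^{(t-1)},\bmu_i^{(t-2)})\bigr]$ and the constant $16$ exactly, so your fallback is not needed.
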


The proofs of these four lemmas are deferred to
\Cref{sec:polynomial-closure}.

\subsection{The optimistic-FTRL path inequality}

Fix $T\geq1$, $i\in[N]$, and a comparator
$\widehat\bmu_i=(\widehat\lambda_i,\widehat\by_i)
\in\widetilde\Delta_i$.  Use $\mathcal P_{i,T}$ from
\Cref{sec:multilinear-analysis} and suppress the player index $i$ below.
Set $\be^{(0)}\coloneqq\boldsymbol 0$.  For $0\leq t\leq T$, write
\begin{align*}
 \bmu^{(t)}=(\lambda^{(t)},\by^{(t)})
 \coloneqq Q\rbr{\eta\rbr{\bS^{(t)}-\be^{(t)}}},
\end{align*}
and set
$\bmu^{(T+1)}=(\lambda^{(T+1)},\by^{(T+1)})
\coloneqq Q\rbr{\eta\bS^{(T)}}$.

The increments of the scores defining $\bmu^{(0)},\ldots,\bmu^{(T+1)}$,
divided by $\eta$, are $\bg^{(t)}-(\be^{(t)}-\be^{(t-1)})$ for $t\in[T]$
and $\be^{(T)}$ for $t=T+1$.  Reindexing the terms involving $\be$ and
applying the standard regularized-leader telescope give
\begin{align*}
 \sum_{t=1}^T\inner{\bg^{(t)}}{\widehat\by-\by^{(t)}}
 =&\sum_{t=1}^T
 \inner{\bg^{(t)}-\rbr{\be^{(t)}-\be^{(t-1)}}}
 {\widehat\by-\by^{(t)}}
 +\inner{\be^{(T)}}{\widehat\by-\by^{(T+1)}}\\
 &+\sum_{t=1}^T
 \inner{\be^{(t)}}{\by^{(t+1)}-\by^{(t)}}\\
 \leq&\frac{\widetilde\psi(\widehat\bmu)
 -\widetilde\psi(\bmu^{(0)})}{\eta}
 -\frac1\eta\sum_{t=1}^{T+1}
 D_{\widetilde\psi}\rbr{\bmu^{(t)},\bmu^{(t-1)}}
 +\sum_{t=1}^T
 \inner{\be^{(t)}}{\by^{(t+1)}-\by^{(t)}}.
\end{align*}

Suppose $2\eta\Xi_i\nbr{\be^{(t)}}_\infty\leq1/100$ for every $t\in[T]$.
For each $t\in[T]$, set
$\widetilde\bmu^{(t)}=(\widetilde\lambda^{(t)},\widetilde\by^{(t)})\coloneqq
Q\rbr{\eta\rbr{\bS^{(t)}+\be^{(t)}}}$.
For every $t\in[T]$, \Cref{eq:filter-mass-stability,eq:lifted-local-smoothness}
and $\exp\rbr{3/200}\leq\sqrt2$ give
\begin{align*}
 D_{\widetilde\psi}\rbr{\bmu^{(t)},\widetilde\bmu^{(t)}}
 \leq&D_{\widetilde\psi}^{\rm sym}
 \rbr{\widetilde\bmu^{(t)},\bmu^{(t)}}\\
 =&\inner{2\eta\be^{(t)}}
 {\widetilde\by^{(t)}-\by^{(t)}}\\
 \leq&20\sqrt2\Xi_i
 \rbr{\sqrt2\rbr{\lambda^{(t)}}^{3/2}}
 \rbr{2\eta}^2\nbr{\be^{(t)}}_\infty^2\\
 =&160\eta^2\Xi_i\rbr{\lambda^{(t)}}^{3/2}
 \nbr{\be^{(t)}}_\infty^2.
\end{align*}
For every $t\in[T]$, the Bregman three-point identity therefore gives
\begin{align*}
 2\eta\inner{\be^{(t)}}{\by^{(t+1)}-\by^{(t)}}
 =&D_{\widetilde\psi}\rbr{\bmu^{(t+1)},\bmu^{(t)}}
 +D_{\widetilde\psi}\rbr{\bmu^{(t)},\widetilde\bmu^{(t)}}
 -D_{\widetilde\psi}\rbr{\bmu^{(t+1)},\widetilde\bmu^{(t)}}\\*
 \leq&D_{\widetilde\psi}\rbr{\bmu^{(t+1)},\bmu^{(t)}}
 +160\eta^2\Xi_i\rbr{\lambda^{(t)}}^{3/2}
 \nbr{\be^{(t)}}_\infty^2.
\end{align*}
Summing over $t\in[T]$ and using
$\bmu^{(1)}=\bmu^{(0)}=Q\rbr{\boldsymbol 0}$,
\begin{align*}
 -\frac1\eta\sum_{t=1}^{T+1}
 D_{\widetilde\psi}\rbr{\bmu^{(t)},\bmu^{(t-1)}}
 +\frac1{2\eta}\sum_{t=1}^T
 D_{\widetilde\psi}\rbr{\bmu^{(t+1)},\bmu^{(t)}}
 =-\frac1{2\eta}\sum_{t=2}^{T+1}
 D_{\widetilde\psi}\rbr{\bmu^{(t)},\bmu^{(t-1)}}
 \leq-\frac{\mathcal P_{i,T}}{2\eta}.
\end{align*}
\begin{samepage}
Using the preceding bounds and the range estimate $4\Gamma_i$ implied by
\Cref{lem:convex-power-entropy}, we maximize over the comparator, sum over
players, and apply \Cref{eq:positive-part-lift} to obtain
\begin{align}
 \sum_{i=1}^N[\Reg_i(T)]_+
 \leq\frac4\eta\sum_{i=1}^N\Gamma_i
 +80\eta\sum_{i=1}^N\sum_{t=1}^T
 \Xi_i\rbr{\lambda_i^{(t)}}^{3/2}\nbr{\be_i^{(t)}}_\infty^2
 -\frac1{2\eta}\sum_{i=1}^N\mathcal P_{i,T}.
 \label{eq:lifted-rvu}
\end{align}
\end{samepage}

\subsection{Weighted variation bound}

\begin{proposition}
\label{prop:multilinear-variation-transfer}
If
\begin{align}
 \gamma\leq\frac{2^{-94}}{N^{20}},
 \label{eq:multilinear-small-rate}
\end{align}
then, for every $T\geq1$,
\begin{align}
 \sum_{i=1}^N\sum_{t=1}^T
 \rbr{\lambda_i^{(t)}}^{3/2}
 \nbr{\rbr{\delta^N\bg_i}^{(t)}}_\infty^2
 \leq2^{204}N^{16}\rbr{1+\mathcal P_T^\Xi}.
 \label{eq:multilinear-transfer}
\end{align}
\end{proposition}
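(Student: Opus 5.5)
The plan is to follow the recursive scheme of \Cref{sec:multilinear-analysis}. For each $r\in[N]$ I start from the term
\[
\sum_{t}\rbr{\lambda_r^{(t)}}^{3/2}\nbr{\rbr{\delta^N\bg_r}^{(t)}}_\infty^2 .
\]
Since $w_{r,\varnothing}^{(t)}=(\lambda_r^{(t)})^{3/2}\leq1$, the filtered multiaffine expansion (\Cref{lem:filtered-multiaffine-expansion}) with $h=N$ bounds it by
\[
8N\sum_{j}\cE_j^N(w_{r,\varnothing})+O\rbr{N^{13}+\gamma^2N^{27}\mathcal P_T^\Xi}.
\]
The error term is already of the required form $N^{16}(1+\mathcal P_T^\Xi)$ once $\gamma^2N^{27}\leq N^{16}$, which holds by \eqref{eq:multilinear-small-rate}. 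So everything reduces to bounding each $\cE_j^{N-|S|}(w_{r,S})$ along a descending chain in which $S$ grows.

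For a term $\cE_j^{N-|S|}(w_{r,S})$ I split into two cases.

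\emph{Case $j\in S\cup\cbr r$.} Here $w_{r,S}^{(t)}\leq\sqrt{\lambda_j^{(t)}}$, because all $\lambda$'s lie in $(0,1)$ and the factor $\lambda_j$ appears with exponent at least $1/2$. Then \eqref{eq:direct-path-bound} (\Cref{lem:filtered-action-path-bound}) gives $O(N^{12}\Xi_j\mathcal P_{j,T})$.

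\emph{Case $j\notin S\cup\cbr r$.} I write $\delta^h\bx_j=\mathsf K_h\mathsf D\bx_j$ by \eqref{eq:filter-factorization}, then use \eqref{eq:direct-action-path} to replace $\mathsf D\bx_j$ by $\overline{\bJ}_j\bd_j$, with $\bd_j=(I-\mathsf D\delta^N)\bg_j$. I then commute the filter past the slowly varying secant $\overline{\bJ}_j$. The commutator involves $\mathsf D\overline{\bJ}_j$, which is controlled by \eqref{eq:consecutive-secant-variation} and the moment bounds \eqref{eq:filter-fourth-moment}, and the startup effects are handled by \eqref{eq:filter-startup-tail}. This leaves a main term $\overline{\bJ}_j\,\mathsf K_\star\delta^{h-1}\bg_j$, via \eqref{eq:ema-controller-identity}. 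Using \eqref{eq:scaled-secant-size}, this is bounded by $32\gamma^2\lambda_j\nbr{(\mathsf K_\star\delta^{h-1}\bg_j)^{(t)}}_\infty^2$, and the extra factor $\lambda_j$ converts the weight into $w_{r,S\cup\cbr j}$.

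Next I need to remove $\mathsf K_\star$ using \Cref{lem:slow-weight-convolution} with $\epsilon=3\cdot2^{28}\gamma N^7\leq 1/(8N)$. By \eqref{eq:played-mass-drift}, each $\lambda_k$ changes by a factor at most $e^{3\cdot2^{28}\gamma N^6}$ per step, and the weight is a product of at most $N+1/2$ such powers. This yields $w^{(t)}\leq e^{\epsilon s}w^{(t-s)}$ for $t\in[T]$. For $t\leq0$, the extension $\lambda_i^{(t)}=\lambda_{m_i}(\boldsymbol 0)$ is constant, so the condition still holds there. I then reapply \Cref{lem:filtered-multiaffine-expansion} at order $h-1$ with weight $w_{r,S\cup\cbr j}$. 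Collecting the pieces gives
\[
\cE_j^{N-|S|}(w_{r,S})\leq c_1N^{13}\gamma^2\sum_k\cE_k^{N-|S|-1}(w_{r,S\cup\cbr j})+c_2N^{12}(1+\mathcal P_T^\Xi),
\]
with explicit constants. Fixing those constants, and checking that the commutator term is of order $\gamma^2(\cdot)\mathcal P$ rather than $\mathcal P/\gamma$, is the step I expect to be the main obstacle.

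Finally I iterate. The chain has depth at most $N-1$, since $|S|\leq N-1$ and each step lowers the order by one, so $h\geq1$ throughout. At each level the branching factor is $N$ and the multiplier is $c_1N^{13}\gamma^2$, so the total amplification is bounded by a geometric series in $c_1N^{14}\gamma^2\cdot 8N$. By \eqref{eq:multilinear-small-rate}, $\gamma^2N^{15}\leq 2^{-188}N^{-25}$, so the ratio is tiny and the series sums to at most $2$. Summing the $O(N^{12}\Xi_j\mathcal P_{j,T})$ leaves and the $N^{12}(1+\mathcal P^\Xi_T)$ additive terms, multiplying by the prefactor $8N$ and by $N$ choices of $r$ and $N$ choices of first $j$, gives $O(N^{15})$ to $O(N^{16})(1+\mathcal P_T^\Xi)$. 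Tracking the numerical constants from \eqref{eq:filter-fourth-moment} (squared, about $2^{56}N^{12}$) through the chain should produce the stated $2^{204}N^{16}$.
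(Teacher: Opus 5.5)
Your proposal is correct and follows essentially the same route as the paper. The commutator step you re-derive is exactly \Cref{lem:filtered-response-identity}, where the factor $\eta$ inside $\overline{\bJ}_j$ makes the residual $O(\gamma^2N^{24}\mathcal P_{j,T})$. The paper closes the recursion by backward induction on the maxima $M_s$ over $(r,i,S)$ with $|S|=s$ rather than by summing over a tree, which is equivalent. One small slip: the slow-variation rate should be $3\cdot2^{28}(N+\frac12)\gamma N^6\leq\frac92\,2^{28}\gamma N^7$ rather than $3\cdot2^{28}\gamma N^7$, which is still far below $1/(8N)$.
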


\begin{proof}
Use the extensions $(\bg_i,\bx_i,\lambda_i)_{i\in[N]}$ from
\Cref{sec:finite-algorithm}, together with
$\overline{\bJ}_i^{(t)}=\eta\bJ_{m_i}^x(\boldsymbol 0)$ for every $i\in[N]$
and $t\leq0$.
For every integer $1\leq h\leq N$, \Cref{eq:ema-controller-identity} gives
\begin{align*}
 \mathsf K_h\rbr{I-\mathsf D\delta^N}
 =\mathsf K_\star\delta^{h-1}.
\end{align*}

Fix $r,j\in[N]$ and $S\subseteq[N]\setminus\cbr r$.  Since $|S|\leq N-1$,
\begin{align*}
 \frac32+|S|\leq N+\frac12.
\end{align*}
Hence \Cref{eq:played-mass-drift,eq:multilinear-small-rate} and $N\geq2$ give,
for every $t\in[T]$,
\begin{align*}
 \abr{\log\frac{w_{r,S}^{(t+1)}}{w_{r,S}^{(t)}}}
 \leq&3\cdot2^{28}\rbr{\frac32+|S|}\gamma N^6\\
 \leq&\frac92\,2^{28}\gamma N^7
 \leq\frac92\,2^{-66}N^{-13}
 \leq\frac1{8N}.
\end{align*}
The same rate condition also gives
\begin{align*}
 3\cdot2^{28}\gamma N^6
 \leq&3\cdot2^{-66}N^{-14}\leq\frac1{100},\\
 3\cdot2^{28}\gamma N^7
 \leq&3\cdot2^{-66}N^{-13}\leq\frac14,\\
 \gamma\leq&N^{-20}.
\end{align*}
These inequalities verify the numerical hypotheses of
\Cref{lem:filtered-action-path-bound,lem:filtered-response-identity,lem:filtered-multiaffine-expansion}.
For every $i\in[N]$,
$\lambda_i^{(1)}=\lambda_{m_i}(\boldsymbol 0)=\lambda_i^{(0)}$.
Telescoping the logarithmic bound and using
$\lambda_i^{(\tau)}=\lambda_i^{(0)}$ for every $i\in[N]$ and integer
$\tau\leq0$ gives, for every $t\in[T]$ and $s\geq0$,
\begin{align*}
 w_{r,S}^{(t)}\leq e^{s/(8N)}w_{r,S}^{(t-s)}.
\end{align*}
The kernel bound in \Cref{eq:filter-fourth-moment} therefore verifies the
remaining hypothesis of \Cref{lem:slow-weight-convolution} with $w=w_{r,S}$
and $\mathsf K\in\cbr{\mathsf K_\star,\mathsf K_h}$ for every integer
$1\leq h\leq N$.

\begin{lemma}
\label{lem:filtered-action-path-bound}
Suppose $3\cdot2^{28}\gamma N^6\leq1/100$ and
$3\cdot2^{28}\gamma N^7\leq1/4$.  For every $j\in[N]$, integer
$1\leq h\leq N$, and horizon $T\geq1$,
\begin{align*}
 \sum_{t=1}^T\sqrt{\lambda_j^{(t)}}
 \nbr{\rbr{\delta^h\bx_j}^{(t)}}_1^2
 \leq2^{60}\sqrt2\,N^{12}\Xi_j\mathcal P_{j,T}.
\end{align*}
\end{lemma}
The proof is deferred to \Cref{sec:response-secants-locality}.

There are two cases.  If $j\in S\cup\cbr r$, then, for every $t\in[T]$,
$w_{r,S}^{(t)}\leq\sqrt{\lambda_j^{(t)}}$, so
\Cref{lem:filtered-action-path-bound} gives
\begin{align}
 \cE_j^{N-|S|}(w_{r,S})
 \leq2^{61}N^{12}\mathcal P_T^\Xi.
 \label{eq:finite-label-direct}
\end{align}

For every player $i\in[N]$, regard $\overline{\bJ}_i$ as pointwise
multiplication on score sequences.  For every $i\in[N]$, integer
$1\leq h\leq N$, and score sequence $\bz$, define
\begin{align*}
 [\mathsf K_h,\overline{\bJ}_i]\bz
 \coloneqq\mathsf K_h(\overline{\bJ}_i\bz)
 -\overline{\bJ}_i(\mathsf K_h\bz).
\end{align*}
For every $i\in[N]$ and integer $1\leq h\leq N$, define
\begin{align*}
 \bc_{i,h}\coloneqq[\mathsf K_h,\overline{\bJ}_i]\bd_i.
\end{align*}

\begin{lemma}
\label{lem:filtered-response-identity}
Suppose $3\cdot2^{28}\gamma N^6\leq1/100$.  For every $i\in[N]$,
integer $1\leq h\leq N$, and time $t\geq1$,
\begin{align}
 \rbr{\delta^h\bx_i}^{(t)}
 =\overline{\bJ}_i^{(t)}
 \rbr{\mathsf K_\star\delta^{h-1}\bg_i}^{(t)}
 +\bc_{i,h}^{(t)}.
 \label{eq:filtered-response-order-reduction}
\end{align}
For every $i\in[N]$, integer $1\leq h\leq N$, and horizon $T\geq1$,
\begin{align}
 \sum_{t=1}^T\nbr{\bc_{i,h}^{(t)}}_1^2
 \leq9\cdot2^{116}\eta^2\Xi_i^2N^{24}\mathcal P_{i,T}
 \leq9\cdot2^{116}\gamma^2N^{24}\mathcal P_{i,T}.
 \label{eq:filtered-response-residual-bound}
\end{align}
\end{lemma}
The proof is deferred to \Cref{sec:response-secants-locality}.

Suppose instead that $j\notin S\cup\cbr r$, and set $h=N-|S|$.  Then
$|S|\leq N-2$, so $h\geq2$.
By \Cref{lem:filtered-response-identity,%
eq:filtered-response-order-reduction},
\begin{align}
 \cE_j^{N-|S|}(w_{r,S})
 \leq2\sum_{t=1}^T w_{r,S}^{(t)}
 \nbr{\overline{\bJ}_j^{(t)}
 \rbr{\mathsf K_\star\delta^{h-1}\bg_j}^{(t)}}_1^2
 +2\sum_{t=1}^T w_{r,S}^{(t)}\nbr{\bc_{j,h}^{(t)}}_1^2.
 \label{eq:finite-label-response-substitution}
\end{align}

\begin{lemma}
\label{lem:filtered-multiaffine-expansion}
Let
\begin{align*}
 C_{\mathrm{multi}}
 \coloneqq2\max\cbr{2^{59},
 9\sqrt2\cdot2^{121}}.
\end{align*}
Suppose $3\cdot2^{28}\gamma N^6\leq1/100$.
For every $i,j\in[N]$ and profile
$\bx\in\prod_{k=1}^N\Delta^{m_k}$,
\begin{align*}
 \sup_{\bxi\in\RR^{m_j},\ \nbr{\bxi}_1\leq1}
 \nbr{\partial_j\mathcal G_i(\bx)[\bxi]}_\infty
 \leq2.
\end{align*}
For every $i\in[N]$ and integer $1\leq h\leq N$, there is a sequence
$\boldsymbol\chi_{i,h}$ such that, for every $t\geq1$,
\begin{align}
 \rbr{\delta^h\bg_i}^{(t)}
 =\sum_{j=1}^N \partial_j\mathcal G_i\rbr{\bx^{(t)}}
 \rbr{\delta^h\bx_j}^{(t)}+\boldsymbol\chi_{i,h}^{(t)}.
 \label{eq:filtered-multiaffine-expansion}
\end{align}
For every $i\in[N]$, integer $1\leq h\leq N$, and horizon $T\geq1$,
\begin{align}
 \sum_{t=1}^T\nbr{\boldsymbol\chi_{i,h}^{(t)}}_\infty^2
 \leq C_{\mathrm{multi}}\rbr{
 N^{13}+\gamma^2N^{27}\mathcal P_T^\Xi}.
 \label{eq:filtered-multiaffine-residual-sharp}
\end{align}
If $\gamma\leq N^{-20}$, then, for every $i\in[N]$, integer
$1\leq h\leq N$, and horizon $T\geq1$,
\begin{align}
 \sum_{t=1}^T\nbr{\boldsymbol\chi_{i,h}^{(t)}}_\infty^2
 \leq C_{\mathrm{multi}}N^{13}
 \rbr{1+\mathcal P_T^\Xi}.
 \label{eq:filtered-multiaffine-residual}
\end{align}
\end{lemma}
The proof is deferred to \Cref{sec:filtered-multiaffine-proof}.

Since
$C_{\mathrm{multi}}=2\max\cbr{2^{59},9\sqrt2\cdot2^{121}}$,
$9\sqrt2<16$, and $9<16$,
\begin{align*}
 C_{\mathrm{multi}}<2^{128},
 \qquad
 9\cdot2^{116}<2^{120}.
\end{align*}
By \Cref{lem:response-secant-estimates,eq:scaled-secant-size},
$\nbr{\overline{\bJ}_j^{(t)}}_{\infty\to1}
\leq8\gamma\sqrt{\lambda_j^{(t)}}$ for every $t\geq1$.  Since
$j\notin S\cup\cbr r$, for every integer $t\leq T$,
\begin{align*}
 w_{r,S}^{(t)}\lambda_j^{(t)}
 =&w_{r,S\cup\cbr j}^{(t)},\\
 h-1=&N-|S|-1=N-\abr{S\cup\cbr j}.
\end{align*}
The weight bound above, with $S\cup\cbr j$ in place of $S$, verifies the
hypotheses of \Cref{lem:slow-weight-convolution} for
$w=w_{r,S\cup\cbr j}$.  Applying that lemma with
$\mathsf K=\mathsf K_\star$, followed by
\Cref{eq:filtered-multiaffine-expansion} at order $h-1$, gives
\begin{align*}
 &\sum_{t=1}^T w_{r,S}^{(t)}
 \nbr{\overline{\bJ}_j^{(t)}
 \rbr{\mathsf K_\star\delta^{h-1}\bg_j}^{(t)}}_1^2\\
 \leq&64\cdot2^{56}\gamma^2N^{12}
 \sum_{t=1}^T w_{r,S}^{(t)}\lambda_j^{(t)}
 \nbr{\rbr{\delta^{h-1}\bg_j}^{(t)}}_\infty^2\\
 \leq&512\cdot2^{56}\gamma^2N^{13}
 \sum_{k=1}^N\cE_k^{N-\abr{S\cup\cbr j}}
 \rbr{w_{r,S\cup\cbr j}}
 +128\cdot2^{56}C_{\mathrm{multi}}
 \gamma^2N^{25}\rbr{1+\mathcal P_T^\Xi}.
\end{align*}
Here \Cref{eq:filtered-multiaffine-residual} applies because
$w_{r,S}^{(t)}\lambda_j^{(t)}\leq1$ for every $t\in[T]$.  Moreover,
\Cref{eq:filtered-response-residual-bound} gives
\begin{align*}
 \sum_{t=1}^T w_{r,S}^{(t)}\nbr{\bc_{j,h}^{(t)}}_1^2
 \leq9\cdot2^{116}\gamma^2N^{24}\mathcal P_{j,T}.
\end{align*}
Substituting the last two displays into
\Cref{eq:finite-label-response-substitution} and using
$\gamma\leq N^{-20}$ gives
\begin{align}
 \cE_j^{N-|S|}(w_{r,S})
 \leq2^{200}N^{13}\gamma^2
 \sum_{k=1}^N\cE_k^{N-\abr{S\cup\cbr j}}
 \rbr{w_{r,S\cup\cbr j}}
 +2^{200}N^{12}\rbr{1+\mathcal P_T^\Xi}.
 \label{eq:finite-label-transfer}
\end{align}

Since $r$, $j$, and $S$ were arbitrary, the preceding two case bounds hold
throughout their stated ranges.  For every integer $0\leq s\leq N-1$, define
\begin{align}
 M_s\coloneqq\max\cbr{\cE_i^{N-s}(w_{r,S}):
 r,i\in[N],\ S\subseteq[N]\setminus\cbr r,\ |S|=s}.
 \label{eq:finite-label-maximum}
\end{align}
When $s=N-1$, every feasible pair $(r,S)$ in
\Cref{eq:finite-label-maximum} satisfies
$S=[N]\setminus\cbr r$.  Hence $j\in S\cup\cbr r$ for every $j\in[N]$, and
\Cref{eq:finite-label-direct} applies at order one.  For every integer
$0\leq s<N-1$,
\Cref{eq:finite-label-direct,eq:finite-label-transfer} give
\begin{align*}
 M_s
 \leq\max\cbr{
 2^{200}N^{12}\mathcal P_T^\Xi,
 2^{200}N^{14}\gamma^2M_{s+1}
 +2^{200}N^{12}\rbr{1+\mathcal P_T^\Xi}}.
\end{align*}
Moreover,
\begin{align*}
 2^{200}N^{14}\gamma^2
 \leq2^{12}N^{-26}\leq2^{-14}\leq\frac12.
\end{align*}
Backward induction thus proves, for every integer $0\leq s\leq N-1$,
\begin{align}
 M_s\leq2^{201}N^{12}\rbr{1+\mathcal P_T^\Xi}.
 \label{eq:finite-label-induction}
\end{align}

Finally, apply \Cref{eq:filtered-multiaffine-expansion} at order $N$ and use
\Cref{eq:finite-label-induction} with $s=0$.  For every $r\in[N]$,
\begin{align*}
 \sum_{t=1}^T\rbr{\lambda_r^{(t)}}^{3/2}
 \nbr{\rbr{\delta^N\bg_r}^{(t)}}_\infty^2
 \leq8N\sum_{j=1}^N\cE_j^N(w_{r,\emptyset})
 +2C_{\mathrm{multi}}N^{13}\rbr{1+\mathcal P_T^\Xi}.
\end{align*}
Summing over $r$, using $C_{\mathrm{multi}}<2^{128}$, and applying
\Cref{eq:finite-label-induction} prove
\begin{align*}
 \sum_{r=1}^N\sum_{t=1}^T
 \rbr{\lambda_r^{(t)}}^{3/2}
 \nbr{\rbr{\delta^N\bg_r}^{(t)}}_\infty^2
 \leq&\rbr{2^{204}N^{15}+2^{129}N^{14}}
 \rbr{1+\mathcal P_T^\Xi}\\
 \leq&2^{204}N^{16}\rbr{1+\mathcal P_T^\Xi},
\end{align*}
as claimed.
\end{proof}

\begin{proof}[Proof of \Cref{thm:entropy-main}]
Fix $T\geq1$ and set $\eta=2^{-94}/(N^{20}\Xi_{\max})$.  Then
\Cref{eq:multilinear-small-rate} holds, and
\begin{align*}
 6\cdot2^{28}\eta\Xi_{\max}N^6
 =6\cdot2^{-66}N^{-14}\leq\frac1{100}.
\end{align*}
Thus, by \Cref{lem:filter-score-locality},
$2\eta\Xi_i\nbr{\be_i^{(t)}}_\infty\leq1/100$ for every $i\in[N]$ and
$t\in[T]$, as required in the derivation of \Cref{eq:lifted-rvu}.  By
\Cref{eq:stable-filter-prediction,eq:multilinear-transfer},
\begin{align*}
 80\eta\sum_{i=1}^N\sum_{t=1}^T
 \Xi_i\rbr{\lambda_i^{(t)}}^{3/2}\nbr{\be_i^{(t)}}_\infty^2
 \leq&80\cdot2^{110}N^{-4}
 \rbr{1+\mathcal P_T^\Xi},\\
 \frac1{2\eta}\sum_{i=1}^N\mathcal P_{i,T}
 \geq&2^{93}N^{20}\mathcal P_T^\Xi.
\end{align*}
Since $80\cdot2^{17}\leq2^{24}\leq N^{24}$,
the coefficient of $\mathcal P_T^\Xi$ after substitution into
\Cref{eq:lifted-rvu} is nonpositive.  Hence
\begin{align*}
 \sum_{i=1}^N[\Reg_i(T)]_+
 \leq4\cdot2^{94}N^{20}\Xi_{\max}\sum_{i=1}^N\Gamma_i
 +80\cdot2^{110}N^{-4}.
\end{align*}

The definitions of $\Gamma_i$, $\Xi_i$, and $m_{\max}$ give
\begin{align*}
 \Xi_{\max}\sum_{i=1}^N\Gamma_i
 \leq4\cdot10^4N
 \rbr{1+\log\rbr{m_{\max}+1}}^4.
\end{align*}
Consequently,
\begin{align*}
 \sum_{i=1}^N[\Reg_i(T)]_+
 \leq&5\cdot2^{109}N^{21}
 \rbr{1+\log\rbr{m_{\max}+1}}^4+80\cdot2^{110}N^{-4}\\
 \leq&7\cdot2^{109}N^{21}
 \rbr{1+\log\rbr{m_{\max}+1}}^4\\
 \leq&2^{112}N^{21}
 \rbr{1+\log\rbr{m_{\max}+1}}^4.
\end{align*}
\end{proof}

\section[Geometry of Qm]{Geometry of $Q_m$}
\label{sec:power-entropy-lift}

Fix an integer $m\geq2$.  Recall $\psi_m$ and $\Gamma_m$ from
\Cref{sec:finite-algorithm}:
\begin{align*}
 \psi_m(\bx)
 =\sum_{a=1}^m x(a)\log x(a),
 \qquad
 \Gamma_m
 = (\log m)^2+2\log m+2.
\end{align*}
Throughout this section, every dummy action index ranges over $[m]$; in
particular, $\sum_a\coloneqq\sum_{a\in[m]}$.

\subsection[Convexity of the lifted regularizer]{Convexity of $\widetilde\psi_m$}

For an interior $\bx\in\Delta^m$, define
\begin{align*}
 W_m(\bx)
 \coloneqq\sum_a x(a)
 \rbr{\log x(a)-\psi_m(\bx)}^2.
\end{align*}
Let $a\sim\bx$ and set
$\varepsilon\coloneqq-\log x(a)$.  For every $t\geq0$,
\begin{align*}
 \Pr(\varepsilon\geq t)
 =\sum_{a:x(a)\leq e^{-t}}x(a)
 \leq \min\cbr{1,me^{-t}},
\end{align*}
so the tail-integral formula gives
\begin{align*}
 \EE\sbr{\varepsilon^2}
 \leq&(\log m)^2+2\log m+2=\Gamma_m,\\
 \EE\sbr{\varepsilon^4}
 \leq&(\log m)^4+4(\log m)^3+12(\log m)^2
       +24\log m+24
 \leq 4\Gamma_m^2.
\end{align*}
The last inequality follows by expansion and holds for $m\geq2$.
Thus
\begin{align}
 W_m(\bx)=&\operatorname{Var}(\varepsilon)
 \leq\Gamma_m,
 \label{eq:entropy-varentropy-bound}\\
 \sum_a x(a)\abr{\log x(a)-\psi_m(\bx)}^4
 =&\EE\sbr{\abr{\varepsilon-\EE\sbr{\varepsilon}}^4}
 \leq8\rbr{\EE\sbr{\varepsilon^4}
       +\rbr{\EE\sbr{\varepsilon}}^4}
 \leq 64\Gamma_m^2.
 \label{eq:entropy-fourth-by-second}
\end{align}
For every $\bz\in\RR^m$ satisfying $\one^\top\bz=0$,
\begin{align}
 \inner{\nabla\psi_m(\bx)}{\bz}
 =&\sum_a\rbr{\log x(a)-\psi_m(\bx)}z(a).
 \nonumber\\
 \inner{\nabla\psi_m(\bx)}{\bz}^2
 \leq&\Gamma_m\sum_a\frac{z(a)^2}{x(a)}.
 \label{eq:entropy-intrinsic-gradient}
\end{align}
The last inequality follows from weighted Cauchy--Schwarz and
\Cref{eq:entropy-varentropy-bound}.

For $\lambda>0$ and $\by=\lambda\bx$, recall $\widetilde\psi_m$ from
\Cref{sec:finite-algorithm}:
\begin{align*}
 \widetilde\psi_m(\lambda,\by)
 =-
 \sqrt{1-\lambda}
 +\sqrt\lambda\sbr{\psi_m(\by/\lambda)-2\Gamma_m-3}
\end{align*}
on $\widetilde\Delta^m$, with
$\widetilde\psi_m(0,\boldsymbol0)=-1$.

\restate{lem:convex-power-entropy}

\begin{proof}
Fix an interior point $\bmu=(\lambda,\lambda\bx)$ and write a tangent vector
as $(s,s\bx+\lambda\bz)$, where $s\in\RR$ and $\one^\top\bz=0$.  Define
\begin{align*}
 \bv\coloneqq&
 \rbr{x(a)\rbr{\log x(a)-\psi_m(\bx)}}_{a\in[m]},\\
 \beta\coloneqq&2\Gamma_m+3-\psi_m(\bx)-W_m(\bx),
 \qquad
 \omega_\lambda\coloneqq
 \rbr{\frac{\lambda}{1-\lambda}}^{3/2}.
\end{align*}
Since $\one^\top\bz=0$,
\begin{align*}
 \inner{\nabla\psi_m(\bx)}{\bz}
 =\sum_a\frac{v(a)z(a)}{x(a)},
 \qquad
 W_m(\bx)
 = \sum_a\frac{v(a)^2}{x(a)}.
\end{align*}
Direct differentiation gives
\begin{align}
 &\rbr{s,s\bx+\lambda\bz}^{\!\top}
 \nabla^2\widetilde\psi_m(\bmu)
 \rbr{s,s\bx+\lambda\bz} \nonumber\\
 =&\sqrt\lambda\sum_a\frac{z(a)^2}{x(a)}
 -\lambda^{-1/2}s\sum_a\frac{v(a)z(a)}{x(a)}
 +\frac{2\Gamma_m+3-\psi_m(\bx)+\omega_\lambda}
 {4\lambda^{3/2}}s^2 \nonumber\\
 =&\sqrt\lambda\sum_a
 \frac{[z(a)-(s/(2\lambda))v(a)]^2}{x(a)}
 +\frac{2\Gamma_m+3-\psi_m(\bx)-W_m(\bx)+\omega_\lambda}
 {4\lambda^{3/2}}s^2.
 \label{eq:power-completed-square}
\end{align}
By \Cref{eq:entropy-varentropy-bound} and the bounds
$\psi_m(\bx)\leq0$ and
$-\psi_m(\bx)\leq\log m\leq\sqrt{\Gamma_m}$,
\begin{align}
 \Gamma_m+3
 \leq\beta
 \leq 3(1+\Gamma_m).
 \label{eq:power-curvature-lower}
\end{align}

By \Cref{eq:entropy-intrinsic-gradient} and Young's inequality,
\begin{align*}
 \lambda^{-1/2}\abr s
 \abr{\sum_a\frac{v(a)z(a)}{x(a)}}
 \leq
 \frac12\sqrt\lambda\sum_a\frac{z(a)^2}{x(a)}
 +\frac{\Gamma_m}{2\lambda^{3/2}}s^2.
\end{align*}
Substitution gives
\begin{align}
 \rbr{s,s\bx+\lambda\bz}^{\!\top}
 \nabla^2\widetilde\psi_m(\bmu)
 \rbr{s,s\bx+\lambda\bz}
 \geq\frac12\sqrt\lambda\sum_a\frac{z(a)^2}{x(a)}
 +\frac{3s^2}{4\lambda^{3/2}}
 +\frac{s^2}{4(1-\lambda)^{3/2}}.
 \label{eq:power-coercivity}
\end{align}
Continuity at $\lambda=0$,
convexity on the closed domain, and the range bound follow from
$-\log m\leq\psi_m(\bx)\leq0$.  The one-sided $\lambda$-derivatives of
$\widetilde\psi_m$ diverge at $\lambda\in\cbr{0,1}$, while
$\partial_a\psi_m(\bx)=1+\log x(a)\to-\infty$ as $x(a)\downarrow0$.  Hence
$Q_m(\btheta)$ is interior for every
$\btheta\in\RR^m$;
strict convexity gives uniqueness.
Fix $\btheta\in\RR^m$ and $\lambda'\in(0,1)$.  Entropy conjugacy gives
\begin{align*}
 \max_{\bx\in\Delta^m}\cbr{
 \lambda'\inner\btheta\bx-\sqrt{\lambda'}\,\psi_m(\bx)}
 =\sqrt{\lambda'}\log\rbr{\sum_a
 \exp\rbr{\sqrt{\lambda'}\,\theta(a)}},
\end{align*}
and its unique maximizer is
\begin{align*}
 \frac{\rbr{\exp\rbr{\sqrt{\lambda'}\,\theta(a)}}_{a\in[m]}}
 {\sum_a\exp\rbr{\sqrt{\lambda'}\,\theta(a)}}.
\end{align*}
Thus, after optimizing over $\bx$, the objective defining $Q_m(\btheta)$ as a
function of $\lambda'$ is
\begin{align*}
 \sqrt{1-\lambda'}
 +\sqrt{\lambda'}\sbr{2\Gamma_m+3
 +\log\rbr{\sum_a
 \exp\rbr{\sqrt{\lambda'}\,\theta(a)}}}.
\end{align*}
Its derivative is
\begin{align*}
 -\frac{1}{2\sqrt{1-\lambda'}}
 +\frac{2\Gamma_m+3
 +\log\rbr{\sum_a
 \exp\rbr{\sqrt{\lambda'}\,\theta(a)}}}{2\sqrt{\lambda'}}
 +\frac12
 \frac{\sum_a\theta(a)\exp\rbr{\sqrt{\lambda'}\,\theta(a)}}
 {\sum_a\exp\rbr{\sqrt{\lambda'}\,\theta(a)}}.
\end{align*}
Fix distinct $\lambda_0,\lambda_1\in(0,1)$.  For every
$j\in\cbr{0,1}$, let $\by_j\in\lambda_j\Delta^m$ maximize
$\inner\btheta\by-\widetilde\psi_m(\lambda_j,\by)$.  For every $t\in(0,1)$,
\begin{align*}
 (1-t)\by_0+t\by_1
 \in\rbr{(1-t)\lambda_0+t\lambda_1}\Delta^m.
\end{align*}
Strict concavity of the lifted objective therefore implies strict concavity
of the scalar objective, so its derivative is strictly decreasing.  The
function in the lemma is $-2$ times this derivative and is therefore strictly
increasing.
The derivative tends to $+\infty$ as $\lambda'\downarrow0$ and to $-\infty$
as $\lambda'\uparrow1$, which gives the two limits in the lemma.  Its unique
zero and the conditional maximizer give the formula for $Q_m(\btheta)$.
\end{proof}

\subsection[Bounds for Qm]{Bounds for $Q_m$}
\label{sec:power-local-response-geometry}

\restate{lem:square-root-entropy-geometry}
\begin{proof}
Fix an integer $m\geq2$ and $\btheta\in\RR^m$, and write
\begin{align*}
 Q_m(\btheta)=(\lambda,\lambda\bx).
\end{align*}
By \Cref{lem:convex-power-entropy}, the maximizer defining $Q_m(\btheta)$ is
unique and belongs to $\operatorname{relint}\widetilde\Delta^m$.
The Hessian in \Cref{eq:power-completed-square} is positive definite, so
the implicit-function theorem makes $Q_m$ and $\bx_m$ smooth.

\begin{align*}
 \bB\coloneqq&\diag(\bx)-\bx\bx^\top,
 \qquad
 \boldsymbol\iota\coloneqq
 \rbr{\log x(a)-\psi_m(\bx)}_{a\in[m]},\\
 \bv\coloneqq&\rbr{x(a)\iota(a)}_{a\in[m]},
 \qquad
 \bw\coloneqq \bx+\frac12\bv,\\
 \beta\coloneqq&2\Gamma_m+3-\psi_m(\bx)-W_m(\bx),
 \qquad
 \omega_\lambda\coloneqq
 \rbr{\frac\lambda{1-\lambda}}^{3/2}.
\end{align*}
For $\be\in\RR^m$, write
$\left.\frac{\ud}{\ud r}Q_m(\btheta+r\be)\right|_{r=0}
=(s,s\bx+\lambda\bz)$.  The linearized first-order
condition gives, for every
$\bxi\in\RR^m$ satisfying $\one^\top\bxi=0$,
\begin{align*}
 \sqrt\lambda\sum_a
 \frac{[z(a)-(s/(2\lambda))v(a)]\xi(a)}{x(a)}
 =&\lambda\inner{\be}{\bxi},
 \\
 \frac{\beta+\omega_\lambda}{4\lambda^{3/2}}s
 =&\inner{\be}{\bw}.
\end{align*}
Therefore,
\begin{align*}
 s
 =\frac{4\lambda^{3/2}}{\beta+\omega_\lambda}
 \inner{\be}{\bw},
 \qquad
 \bz
 = \sqrt\lambda\bB\be+\frac{s}{2\lambda}\bv.
\end{align*}
Consequently, $\bJ_m^x(\btheta)$ and $\bJ_{\by_m}(\btheta)$ are
\begin{align}
 \bJ_m^x(\btheta)
 =&\sqrt\lambda\sbr{
 \bB+\frac2{\beta+\omega_\lambda}\bv\bw^\top},
 \label{eq:power-normalized-jacobian}\\
 \bJ_{\by_m}(\btheta)
 =&\lambda^{3/2}\sbr{
 \bB+\frac4{\beta+\omega_\lambda}\bw\bw^\top}.
 \nonumber
\end{align}
By \Cref{eq:entropy-varentropy-bound,eq:power-curvature-lower},
\begin{align*}
 \nbr\bv_1
 \leq\sqrt{\Gamma_m},
 \qquad
 \nbr\bw_1
 \leq 1+\frac12\sqrt{\Gamma_m},
 \qquad
 \nbr\bB_{\infty\to1}
 \leq 2.
\end{align*}
Consequently,
\begin{align}
 \nbr{\bJ_m^x(\btheta)}_{\infty\to1}
 \leq&5\sqrt\lambda,
 \label{eq:power-jacobian-size}\\
 \be^\top\bJ_{\by_m}(\btheta)\be
 \leq&10\lambda^{3/2}\nbr\be_\infty^2,
 \label{eq:power-dual-score}\\
 \frac{\abr s}{\lambda}
 \leq&6\nbr\be_\infty.
 \label{eq:power-radial-comparability}
\end{align}
\Cref{eq:power-jacobian-size} proves \Cref{eq:filter-jacobian-size};
integrating \Cref{eq:power-radial-comparability} along
$\btheta+t\be$, $t\in[0,1]$, proves \Cref{eq:filter-mass-stability}.

Let $\boldsymbol\vartheta\colon[0,1]\to\RR^m$ be differentiable and set
\begin{align*}
 \bmu(r)
 \coloneqq Q_m(\boldsymbol\vartheta(r))
 = (\lambda(r),\lambda(r)\bx(r)).
\end{align*}
Fix $r\in[0,1]$, suppress the argument $r$, and set
\begin{align*}
 s\coloneqq&\dot\lambda,
 \qquad
 \tau\coloneqq s/\lambda,
 \qquad
 \mathcal E\coloneqq \sum_a\frac{\dot x(a)^2}{x(a)},\\
 \mathcal H\coloneqq&
 \dot\bmu^\top\nabla^2\widetilde\psi_m(\bmu)\dot\bmu.
\end{align*}
Differentiating $\bB$, $\psi_m$, $\bv$, $W_m$, and $\beta$ with respect to
$r$ gives
\begin{align*}
 \dot\bB
 =&\diag(\dot\bx)-\dot\bx\bx^\top-\bx\dot\bx^\top,\\
 \dot\psi_m
 =&\sum_a\iota(a)\dot x(a),\\
 \dot v(a)
 =&\dot x(a)(1+\iota(a))-x(a)\dot\psi_m,
 \qquad a\in[m],\\
 \dot W_m
 =&\sum_a\dot x(a)(\iota(a)^2+2\iota(a)),
 \qquad
 \dot\beta=-\dot\psi_m-\dot W_m.
\end{align*}
Weighted Cauchy--Schwarz and
\Cref{eq:entropy-varentropy-bound,eq:entropy-fourth-by-second} yield
\begin{align}
 \nbr{\dot\bB}_{\infty\to1}
 \leq3\sqrt{\mathcal E},
 \qquad
 \nbr{\dot\bv}_1+\nbr{\dot\bw}_1
 \leq 6\sqrt{1+\Gamma_m}\sqrt{\mathcal E},
 \qquad
 \abr{\dot\beta}
 \leq 13(1+\Gamma_m)\sqrt{\mathcal E}.
 \label{eq:power-block-variation}
\end{align}
The two scalar estimates
\begin{align}
 \frac{\sqrt\lambda\,\omega_\lambda
 (\lambda^{-1}+(1-\lambda)^{-1})}
 {(\beta+\omega_\lambda)^2}
 \leq&3\sqrt{\lambda^{-3/2}+(1-\lambda)^{-3/2}},
 \label{eq:power-radial-scalar-one}\\
 \frac{\lambda^{3/2}\omega_\lambda
 (\lambda^{-1}+(1-\lambda)^{-1})}
 {(\beta+\omega_\lambda)^2}
 \leq&\sqrt2
 \label{eq:power-radial-scalar-two}
\end{align}
follow by splitting at $\lambda=1/2$.

The derivative of \Cref{eq:power-normalized-jacobian} is
\begin{align*}
 \dot\bJ_m^x
 =&\frac12\tau\bJ_m^x
 +\sqrt\lambda\sbr{
 \dot\bB
 +\frac{2(\dot\bv\bw^\top+\bv\dot\bw^\top)}
 {\beta+\omega_\lambda}
 -\frac{2\bv\bw^\top(\dot\beta+\dot\omega_\lambda)}
 {(\beta+\omega_\lambda)^2}},\\
 \dot\omega_\lambda
 =&\frac32\omega_\lambda s
 \rbr{\lambda^{-1}+(1-\lambda)^{-1}}.
\end{align*}
Substituting \Cref{eq:power-block-variation,eq:power-jacobian-size,%
eq:power-radial-scalar-one} and the bounds on $\bv,\bw,\beta$ gives
\begin{align*}
 \nbr{\dot\bJ_m^x}_{\infty\to1}
 \leq200(1+\Gamma_m)\sbr{
 \sqrt\lambda\sqrt{\mathcal E}
 +\sqrt\lambda\abr\tau
 +\abr s\sqrt{\lambda^{-3/2}+(1-\lambda)^{-3/2}}}.
\end{align*}
By \Cref{eq:power-coercivity},
\begin{align*}
 \sqrt\lambda\sqrt{\mathcal E}
 \leq\sqrt2\sqrt{\mathcal H},
 \qquad
 \sqrt\lambda\abr\tau
 \leq \frac2{\sqrt3}\sqrt{\mathcal H},
 \qquad
 \abr s\sqrt{\lambda^{-3/2}+(1-\lambda)^{-3/2}}
 \leq 2\sqrt{\mathcal H}.
\end{align*}
Hence
\begin{align}
 \nbr{\dot\bJ_m^x}_{\infty\to1}
 \leq10^3(1+\Gamma_m)\sqrt{\mathcal H}
 \leq \Xi_m\sqrt{\mathcal H}.
 \label{eq:power-metric-variation}
\end{align}
For an arbitrary $\be\in\RR^m$, specialize the preceding path to
$\boldsymbol\vartheta(r)=\btheta+r\be$, $r\in[0,1]$.  At $r=0$, the
linearized response optimality condition gives
\begin{align*}
 \mathcal H
 =\inner{\be}{
 \left.\dfrac{\ud}{\ud r}\by_m(\btheta+r\be)\right|_{r=0}}.
\end{align*}
Applying \Cref{eq:power-metric-variation} at $r=0$ proves
\Cref{eq:filter-jacobian-variation}.

Fix $\be\in\RR^m$.  Along $Q_m(\btheta+t\be)$, $t\in[0,1]$, let overdots
denote derivatives with respect to $t$, set $s\coloneqq\dot\lambda$ and
$\tau\coloneqq s/\lambda$, and use $\lambda$, $\bx$, $\bB$,
$\boldsymbol\iota$, $\bv$, $\bw$, $\beta$, and $\omega_\lambda$ for their
values at $t$.  Define
\begin{align*}
 \be^\circ\coloneqq&\be-\inner{\be}{\bx}\one,
 \qquad
 \varpi\coloneqq \inner{\be}{\bw},\\
 \kappa\coloneqq&\frac{4\lambda^{3/2}}
 {\beta+\omega_\lambda},
 \qquad
 \mathcal T\coloneqq \lambda^{3/2}\sum_ax(a)e^\circ(a)^2,\\
 \mathcal S\coloneqq&\kappa\varpi^2.
\end{align*}
Applying the formulas for $s$ and $\bz$ above with $\bz=\dot\bx$ at
$Q_m(\btheta+t\be)$ gives, for every $a\in[m]$,
\begin{align*}
 \frac{\dot x(a)}{x(a)}
 =\sqrt\lambda e^\circ(a)+\frac12\tau\iota(a),
 \qquad
 s= \kappa\varpi,
 \qquad
 \be^\top\bJ_{\by_m}(\btheta+t\be)\be
 = \mathcal T+\mathcal S.
\end{align*}
Differentiating $\mathcal T$, $\beta$, $\varpi$, $\kappa$, and $\mathcal S$ with
respect to $t$ gives
\begin{align}
 \dot{\mathcal T}
 =&\frac32\tau\mathcal T
 +\lambda^2\sum_ax(a)e^\circ(a)^3
 +\frac12\tau\lambda^{3/2}\sum_ax(a)e^\circ(a)^2\iota(a),
 \label{eq:power-tangent-variation}\\
 \dot\beta
 =&-\sum_ax(a)(\iota(a)^2+3\iota(a))
 \rbr{\sqrt\lambda e^\circ(a)+\frac12\tau\iota(a)},
 \nonumber\\
 \dot\varpi
 =&\frac32\sum_ax(a)e^\circ(a)
 \rbr{\sqrt\lambda e^\circ(a)+\frac12\tau\iota(a)}
 +\frac12\sum_ax(a)e^\circ(a)\iota(a)
 \rbr{\sqrt\lambda e^\circ(a)+\frac12\tau\iota(a)},
 \nonumber\\
 \frac{\dot\kappa}{\kappa}
 =&\frac32\tau-
 \frac{\dot\beta+\dot\omega_\lambda}{\beta+\omega_\lambda},
 \qquad
 \dot{\mathcal S}
 = \frac{\dot\kappa}{\kappa}\mathcal S+2\kappa\varpi\dot\varpi.
 \label{eq:power-radial-energy-variation}
\end{align}
Using $\abr{e^\circ(a)}\leq2\nbr\be_\infty$ for every $a\in[m]$,
\Cref{eq:entropy-varentropy-bound,eq:entropy-fourth-by-second,%
eq:power-radial-scalar-two}, and weighted Cauchy--Schwarz gives
\begin{align*}
 \abr\tau
 \leq&6\nbr\be_\infty,
 \qquad
 \frac{\abr{\dot\kappa}}{\kappa}
 \leq130\sqrt{1+\Gamma_m}\nbr\be_\infty,\\
 \sqrt\kappa\abr{\dot\varpi}
 \leq&8\nbr\be_\infty\rbr{\sqrt{\mathcal T}+\sqrt{\mathcal S}}.
\end{align*}
Substitution in \Cref{eq:power-tangent-variation,%
eq:power-radial-energy-variation} gives
\begin{align*}
 \abr{\dot{\mathcal T}}
 \leq&12\nbr\be_\infty(\mathcal T+\mathcal S),\\
 \abr{\dot{\mathcal S}}
 \leq&160(1+\Gamma_m)\nbr\be_\infty(\mathcal T+\mathcal S).
\end{align*}
Therefore, whenever $\mathcal T+\mathcal S>0$,
\begin{align}
 \abr{\frac{\ud}{\ud t}\log(\mathcal T+\mathcal S)}
 \leq200(1+\Gamma_m)\nbr\be_\infty
 \leq \Xi_m\nbr\be_\infty.
 \label{eq:power-aligned-stability}
\end{align}

Suppose $\Xi_m\nbr\be_\infty\leq1/100$, and set
\begin{align*}
 \bmu_t\coloneqq Q_m(\btheta+t\be),
 \qquad t\in[0,1],
 \qquad
 \mathcal H(t)\coloneqq
 \be^\top\bJ_{\by_m}(\btheta+t\be)\be.
\end{align*}
If $\be\neq\boldsymbol0$, positive definiteness and
\Cref{eq:power-aligned-stability} imply
$\max_{t\in[0,1]}\mathcal H(t)\leq
2\min_{t\in[0,1]}\mathcal H(t)$.  Fenchel duality and
Taylor's integral formula give
\begin{align*}
 D_{\widetilde\psi_m}(\bmu_1,\bmu_0)
 =&\int_0^1t\mathcal H(t)\,\ud t,
 \qquad
 D_{\widetilde\psi_m}(\bmu_0,\bmu_1)
 = \int_0^1(1-t)\mathcal H(t)\,\ud t,
 \\
 D_{\widetilde\psi_m}^{\rm sym}(\bmu_1,\bmu_0)
 =&\int_0^1\mathcal H(t)\,\ud t.
\end{align*}
Thus each directed divergence is at least one quarter of the symmetric
divergence, proving \Cref{eq:filter-directed-comparability}.

By \Cref{eq:power-radial-comparability},
$\lambda_m(\btheta+t\be)\leq2\lambda_m(\btheta+t'\be)$ for every
$t,t'\in[0,1]$.
Integrating
\Cref{eq:power-dual-score} therefore gives
\begin{align*}
 \inner{\be}{\by_m(\btheta+\be)-\by_m(\btheta)}
 \leq20\sqrt2\,\widehat\lambda^{3/2}
 \nbr\be_\infty^2,
\end{align*}
which is stronger than \Cref{eq:lifted-local-smoothness}.

Finally, write
$\bmu_\ell=(\lambda_\ell,\lambda_\ell\bx_\ell)$ for
$\ell\in\cbr{0,1}$.  Direct differentiation gives
\begin{align*}
 \bnu(t)
 =(1-t)\bmu_0+t\bmu_1
 = (\lambda(t),\lambda(t)\bx(t)),
 \qquad t\in[0,1],
 \qquad
 \dot\bx(t)
 =\frac{\lambda_0\lambda_1}{\lambda(t)^2}
 \rbr{\bx_1-\bx_0}.
\end{align*}
Since $\lambda_\ell\leq2\lambda_{1-\ell}$ for $\ell\in\cbr{0,1}$,
$\lambda(t)\geq\widehat\lambda/2$ and
$\lambda_0\lambda_1/\lambda(t)^2\geq1/2$.  Hence
\Cref{eq:power-coercivity} and weighted Cauchy--Schwarz give
\begin{align*}
 \dot\bnu(t)^\top\nabla^2\widetilde\psi_m(\bnu(t))\dot\bnu(t)
 \geq\frac1{8\sqrt2}\sqrt{\widehat\lambda}
 \nbr{\bx_1-\bx_0}_1^2.
\end{align*}
Taylor's integral formula in both orientations yields
\begin{align*}
 \min\cbr{
 D_{\widetilde\psi_m}(\bmu_1,\bmu_0),
 D_{\widetilde\psi_m}(\bmu_0,\bmu_1)}
 \geq\frac1{16\sqrt2}\sqrt{\widehat\lambda}
 \nbr{\bx_1-\bx_0}_1^2.
\end{align*}
Since $\Xi_m\geq1$, this proves \Cref{eq:lifted-local-norm-bound}.
\end{proof}

\section{EMA estimates for multilinear games}
\label{sec:polynomial-closure}

Use $\rho$ and $\gamma$ from
\Cref{eq:stable-filter-law,eq:normalized-learning-rate}.
\subsection{EMA kernels and weighted convolution}

\restate{lem:filter-kernel-bounds}

\begin{proof}
The transfer functions are
\begin{align*}
 \mathsf A(\varsigma)=\frac1{1-\rho\varsigma},
 \qquad
 \delta(\varsigma)=\frac{1-\varsigma}{1-\rho\varsigma},
 \qquad
 \mathsf K_h(\varsigma)=
 \frac{(1-\varsigma)^{h-1}}{(1-\rho\varsigma)^h}.
\end{align*}
Thus $\delta=\mathsf A\mathsf D$ and
$\mathsf K_h\mathsf D=\delta^h$.  Since these operators commute,
\begin{align*}
 \mathsf K_h(I-\mathsf D\delta^N)
 =\mathsf A\delta^{h-1}-\delta^{N+h}
 =\rbr{\mathsf A-\delta^{N+1}}\delta^{h-1}
 =\mathsf K_\star\delta^{h-1}.
\end{align*}

Set $R=1+1/(4N)$.  Then $\rho R<1$ and
\begin{align*}
 \rho R^2
 =1-\frac1{2N}-\frac7{16N^2}-\frac1{16N^3}<1.
\end{align*}
For $\vartheta\in\RR$ and $\varsigma=Re^{\mathrm i\vartheta}$,
\begin{align*}
 \abr{\delta(\varsigma)}^2
 =&\frac{1+R^2-2R\cos\vartheta}
 {1+\rho^2R^2-2\rho R\cos\vartheta},\\
 \frac{\ud}{\ud(\cos\vartheta)}\abr{\delta(\varsigma)}^2
 =&\frac{2R(1-\rho)(\rho R^2-1)}
 {\rbr{1+\rho^2R^2-2\rho R\cos\vartheta}^2}\leq0.
\end{align*}
The maximum on $\abr{\varsigma}=R$ is therefore attained at
$\varsigma=-R$.  Since
$R(4-3\rho)\leq3$ for $N\geq2$,
\begin{align*}
 \sup_{\abr{\varsigma}=R}\abr{\delta(\varsigma)}
 =\frac{1+R}{1+\rho R}
 =1+\frac{(1-\rho)R}{1+\rho R}
 \leq1+\frac3{4N}.
\end{align*}
For $h\leq N+1\leq3N/2$, this gives
\begin{align*}
 \sup_{\abr{\varsigma}=R}\abr{\delta(\varsigma)}^h
 \leq e^{9/8}<4.
\end{align*}
Moreover, $1-\rho R\geq3/(4N)$, and hence
\begin{align*}
 \sup_{\abr{\varsigma}=R}\abr{\mathsf K_h(\varsigma)}
 \leq&\frac{4}{1-\rho R}\leq6N,\\
 \sup_{\abr{\varsigma}=R}\abr{\mathsf K_\star(\varsigma)}
 \leq&\frac1{1-\rho R}+4\leq6N.
\end{align*}
The same upper bound $6N$ applies to $\delta^h$.  Cauchy's coefficient
estimate therefore gives
\begin{align*}
 \abr{k_s}\leq6NR^{-s},
 \qquad s\geq0.
\end{align*}

Fix $0\leq\epsilon\leq1/(8N)$.  The inequality
$\log(1+y)\geq y/(1+y)$ for $y\geq0$ gives
\begin{align*}
 \log R-\frac\epsilon2
 \geq\frac1{4N+1}-\frac1{16N}
 \geq\frac1{8N}.
\end{align*}
Thus $e^{\epsilon/2}/R\leq e^{-1/(8N)}$.  Since
$1-e^{-y}\geq y/2$ for $0\leq y\leq1$, we have
$1-e^{\epsilon/2}/R\geq1/(16N)$.  Consequently,
\begin{align*}
 \sum_{s\geq0}e^{\epsilon s/2}(s+1)^4\abr{k_s}
 \leq&6N\sum_{s\geq0}(s+1)^4
 \rbr{\frac{e^{\epsilon/2}}R}^s\\
 \leq&\frac{144N}{\rbr{1-e^{\epsilon/2}/R}^5}
 \leq144\cdot16^5N^6
 \leq 2^{28}N^6,
\end{align*}
where, for every $q\in[0,1)$,
$\sum_{s\geq0}(s+1)^4q^s
=(1+11q+11q^2+q^3)/(1-q)^5$.
The absolute convergence permits evaluation at $\varsigma=1$, so the
coefficients of $\delta^h$ sum to $\delta(1)^h=0$.

For the last claim, the output difference at time $t\geq1$ is
\begin{align*}
 \boldsymbol\tau^{(t)}=\boldsymbol\varphi
 \sum_{s\geq t}k_s.
\end{align*}
Apply \Cref{eq:filter-fourth-moment} with
$\epsilon=1/(8N)$.  For every $t\geq1$,
\begin{align*}
 \nbr{\boldsymbol\tau^{(t)}}
 \leq e^{-t/(16N)}\nbr{\boldsymbol\varphi}
 \sum_{s\geq0}e^{s/(16N)}\abr{k_s}
 \leq 2^{28}N^6e^{-t/(16N)}
 \nbr{\boldsymbol\varphi}.
\end{align*}
Since $e^{1/(8N)}-1\geq1/(8N)$, summing the squares gives
\begin{align*}
 \sum_{t=1}^{\infty}\nbr{\boldsymbol\tau^{(t)}}^2
 \leq\frac{2^{56}N^{12}}
 {e^{1/(8N)}-1}\nbr{\boldsymbol\varphi}^2
 \leq 2^{59}N^{13}\nbr{\boldsymbol\varphi}^2.
\end{align*}
\end{proof}

\restate{lem:slow-weight-convolution}

\begin{proof}
The hypothesis on $w$ gives
\begin{align*}
 \sqrt{w^{(t)}}\nbr{\rbr{\mathsf K\bz}^{(t)}}
 \leq\sum_{s\geq0}e^{\epsilon s/2}\abr{k_s}
 \sqrt{w^{(t-s)}}\nbr{\bz^{(t-s)}}.
\end{align*}
Extend the scalar sequence on the right by zero outside $[T]$.  The
$\ell_1*\ell_2\to\ell_2$ Young inequality gives the conclusion.
\end{proof}

\subsection{Score and response movement}
\label{sec:response-secants-locality}

\restate{lem:filter-score-locality}

\begin{proof}
For every $i\in[N]$ and $t\geq1$,
$\nbr{\bg_i^{(t)}}_\infty\leq1$, and
$\bg_i^{(s)}=\boldsymbol 0$ for every integer $s\leq0$.  Since
$\be_i=\delta^N\bg_i$,
\Cref{lem:filter-kernel-bounds} gives the first bound.  Moreover,
\begin{align*}
 \nbr{\rbr{(I-\mathsf D\delta^N)\bg_i}^{(t)}}_\infty
 =\nbr{\widehat\bg_i^{(t)}+\be_i^{(t-1)}}_\infty
 \leq1+2^{29}N^6
 \leq3\cdot2^{28}N^6.
\end{align*}
Fix $i\in[N]$ and $t\geq1$.  Then
\begin{align*}
 \eta^{-1}\rbr{\btheta_i^{(t+1)}-\btheta_i^{(t)}}
 =&\sum_{s=1}^t\bg_i^{(s)}+\widehat\bg_i^{(t+1)}
 -\sum_{s=1}^{t-1}\bg_i^{(s)}-\widehat\bg_i^{(t)}
 \\
 =&\rbr{(I-\mathsf D\delta^N)\bg_i}^{(t+1)}\\
 =&\widehat\bg_i^{(t+1)}+\be_i^{(t)}.
\end{align*}
Applying the preceding bound at time $t+1$ and multiplying by $\eta\Xi_i$
verifies the hypothesis of
\Cref{lem:square-root-entropy-geometry}.  Applying
\Cref{eq:filter-mass-stability} proves \Cref{eq:played-mass-drift}.
\end{proof}

\restate{lem:response-secant-estimates}

\begin{proof}
Fix $i\in[N]$ and $t\geq1$.  For every $\alpha\in[0,1]$,
\begin{align*}
 \btheta_i^{(t-1)}+\alpha\eta\bd_i^{(t)}-\btheta_i^{(t)}
 =-(1-\alpha)\eta\bd_i^{(t)},
\end{align*}
and
\Cref{eq:filter-pointwise-score-bound,eq:normalized-learning-rate} give
\begin{align*}
 \Xi_i\nbr{(1-\alpha)\eta\bd_i^{(t)}}_\infty
 \leq(1-\alpha)\,3\cdot2^{28}\gamma N^6
 \leq\frac1{100}.
\end{align*}
Therefore, \Cref{lem:square-root-entropy-geometry,eq:filter-mass-stability}
gives, for every $\alpha\in[0,1]$,
\begin{align*}
 \lambda_{m_i}\rbr{\btheta_i^{(t-1)}+\alpha\eta\bd_i^{(t)}}
 \leq e^{1/100}\lambda_i^{(t)}
 \leq2\lambda_i^{(t)}.
\end{align*}
Integrating \Cref{eq:filter-jacobian-size} over $\alpha\in[0,1]$ and using
$\eta\Xi_i\leq\gamma$ proves \Cref{eq:scaled-secant-size}.

Since $\btheta_i^{(t)}-\btheta_i^{(t-1)}=\eta\bd_i^{(t)}$,
\Cref{eq:lifted-local-norm-bound} gives
\begin{align*}
 \sqrt{\lambda_i^{(t)}}
 \nbr{\rbr{\mathsf D\bx_i}^{(t)}}_1^2
 \leq16\sqrt2\Xi_i
 D_{\widetilde\psi_i}\rbr{\bmu_i^{(t)},\bmu_i^{(t-1)}}.
\end{align*}
This is \Cref{eq:movement-path-bound}.  The identity in
\Cref{eq:direct-action-path}, \Cref{eq:scaled-secant-size}, and
\Cref{eq:filter-pointwise-score-bound} give
\begin{align*}
 \nbr{\rbr{\mathsf D\bx_i}^{(t)}}_1
 \leq3\sqrt2\cdot2^{30}\eta\Xi_iN^6
 \sqrt{\lambda_i^{(t)}},
\end{align*}
which proves \Cref{eq:pointwise-action-movement}.

Since $t$ was arbitrary, this proves the first three assertions.  It remains
to prove \Cref{eq:consecutive-secant-variation}.  Fix $t\geq1$.  The
fundamental theorem of calculus gives
\begin{align*}
 \overline{\bJ}_i^{(t)}-\eta\bJ_{m_i}^x\rbr{\btheta_i^{(t-1)}}
 =&\eta\int_0^1(1-\alpha)
 \frac{\ud}{\ud\alpha}\bJ_{m_i}^x\rbr{
 \btheta_i^{(t-1)}+\alpha\eta\bd_i^{(t)}}\ud\alpha,\\
 \eta\bJ_{m_i}^x\rbr{\btheta_i^{(t)}}-\overline{\bJ}_i^{(t)}
 =&\eta\int_0^1\alpha
 \frac{\ud}{\ud\alpha}\bJ_{m_i}^x\rbr{
 \btheta_i^{(t-1)}+\alpha\eta\bd_i^{(t)}}\ud\alpha.
\end{align*}
By \Cref{eq:filter-jacobian-variation}, the Cauchy--Schwarz inequality,
and \Cref{eq:filter-directed-comparability},
\begin{align*}
 \max\cbr{
 \nbr{\overline{\bJ}_i^{(t)}-
 \eta\bJ_{m_i}^x\rbr{\btheta_i^{(t-1)}}}_{\infty\to1}^2,
 \nbr{\overline{\bJ}_i^{(t)}-
 \eta\bJ_{m_i}^x\rbr{\btheta_i^{(t)}}}_{\infty\to1}^2}
 \leq&\eta^2\Xi_i^2D_{\widetilde\psi_i}^{\rm sym}
 \rbr{\bmu_i^{(t)},\bmu_i^{(t-1)}}\\
 \leq&4\eta^2\Xi_i^2D_{\widetilde\psi_i}
 \rbr{\bmu_i^{(t)},\bmu_i^{(t-1)}}.
\end{align*}
Since $t$ was arbitrary, the preceding bound holds for every $t\geq1$.
Since $\overline{\bJ}_i^{(1)}=\overline{\bJ}_i^{(0)}
=\eta\bJ_{m_i}^x(\boldsymbol 0)$, for every
$t\geq2$ the preceding bound gives
\begin{align*}
 \nbr{\rbr{\mathsf D\overline{\bJ}_i}^{(t)}}_{\infty\to1}^2
 \leq8\eta^2\Xi_i^2\sbr{
 D_{\widetilde\psi_i}\rbr{\bmu_i^{(t)},\bmu_i^{(t-1)}}
 +D_{\widetilde\psi_i}\rbr{\bmu_i^{(t-1)},\bmu_i^{(t-2)}}}.
\end{align*}
Fix $T\geq1$.  Summing over $t\in[T]$ proves
\Cref{eq:consecutive-secant-variation}.
\end{proof}

\restate{lem:filtered-action-path-bound}

\begin{proof}
Extend $\lambda_j$ constantly to all nonpositive integer times.  By
\Cref{eq:played-mass-drift}, for $t\in[T]$ and $s\geq0$,
\begin{align*}
 \abr{\log\frac{\sqrt{\lambda_j^{(t)}}}
 {\sqrt{\lambda_j^{(t-s)}}}}
 \leq3\cdot2^{27}\gamma N^6s.
\end{align*}
The rate condition gives
$3\cdot2^{27}\gamma N^6s\leq s/(8N)$.  Since
$\delta^h=\mathsf K_h\mathsf D$ and
$\mathsf D\bx_j$ vanishes at every nonpositive integer time,
\Cref{lem:filter-kernel-bounds,lem:slow-weight-convolution} gives
\begin{align*}
 \sum_{t=1}^T\sqrt{\lambda_j^{(t)}}
 \nbr{\rbr{\delta^h\bx_j}^{(t)}}_1^2
 \leq&2^{56}N^{12}
 \sum_{t=1}^T\sqrt{\lambda_j^{(t)}}
 \nbr{\rbr{\mathsf D\bx_j}^{(t)}}_1^2\\
 \leq&2^{60}\sqrt2\,N^{12}\Xi_j\mathcal P_{j,T},
\end{align*}
where the last inequality follows by summing
\Cref{eq:movement-path-bound} over $t\in[T]$.
\end{proof}

\restate{lem:filtered-response-identity}

\begin{proof}
Apply $\mathsf K_h$ to \Cref{eq:direct-action-path}, commute it past
$\overline{\bJ}_i$, and use
\Cref{eq:filter-factorization,eq:ema-controller-identity} to obtain
\Cref{eq:filtered-response-order-reduction}.

Write $k_s=\operatorname{ker}(\mathsf K_h)_s$.  By the definition of
$\bc_{i,h}$, for every $t\geq1$,
\begin{align*}
 \bc_{i,h}^{(t)}
 =&\sum_{s\geq1}k_s
 \rbr{\overline{\bJ}_i^{(t-s)}-
 \overline{\bJ}_i^{(t)}}\bd_i^{(t-s)}\\
 =&-\sum_{s\geq1}k_s\sum_{r=0}^{s-1}
 \rbr{\mathsf D\overline{\bJ}_i}^{(t-r)}\bd_i^{(t-s)}.
\end{align*}
For every $t\in[T]$, $s\geq1$, and integer $0\leq r<s$,
\Cref{eq:filter-pointwise-score-bound} gives
\begin{align*}
 \nbr{\rbr{\mathsf D\overline{\bJ}_i}^{(t-r)}
 \bd_i^{(t-s)}}_1
 \leq3\cdot2^{28}N^6
 \nbr{\rbr{\mathsf D\overline{\bJ}_i}^{(t-r)}}_{\infty\to1}.
\end{align*}
Weighted Cauchy--Schwarz over the pairs $(s,r)$, followed by the time
shifts $\alpha=t-r$, gives
\begin{align*}
 \sum_{t=1}^T
 \nbr{\bc_{i,h}^{(t)}}_1^2
 \leq&9\cdot2^{56}N^{12}
 \rbr{\sum_{s\geq1}s\abr{k_s}}
 \sum_{s\geq1}\abr{k_s}\sum_{r=0}^{s-1}
 \sum_{t=1}^T
 \nbr{\rbr{\mathsf D\overline{\bJ}_i}^{(t-r)}}_{\infty\to1}^2\\
 \leq&9\cdot2^{56}N^{12}
 \rbr{\sum_{s\geq1}s\abr{k_s}}^2
 \sum_{\alpha=1}^T
 \nbr{\rbr{\mathsf D\overline{\bJ}_i}^{(\alpha)}}_{\infty\to1}^2\\
 \leq&9\cdot2^{116}
 \eta^2\Xi_i^2N^{24}\mathcal P_{i,T}.
\end{align*}
The second inequality uses
$\overline{\bJ}_i^{(\tau)}=\eta\bJ_{m_i}^x(\boldsymbol0)$ for every integer
$\tau\leq0$, and the last uses
\Cref{eq:filter-fourth-moment,eq:consecutive-secant-variation}.
By the definition of $\bc_{i,h}$, the preceding display proves the first
inequality in \Cref{eq:filtered-response-residual-bound}.  The inequality
$\eta\Xi_i\leq\gamma$ gives
\begin{align*}
 \sum_{t=1}^T\nbr{\bc_{i,h}^{(t)}}_1^2
 \leq9\cdot2^{116}
 \gamma^2N^{24}\mathcal P_{i,T}.
\end{align*}
This proves the second inequality.
\end{proof}

\subsection{Filtered action and payoff differences}
\label{sec:filtered-multiaffine-proof}

\restate{lem:filtered-multiaffine-expansion}

\begin{proof}
For every $i\in[N]$, integer $1\leq q\leq N$, pairwise distinct
$j_1,\ldots,j_q\in[N]$, profile
$\bx\in\prod_{k=1}^N\Delta^{m_k}$, and directions
$\bz_{j_\ell}\in\RR^{m_{j_\ell}}$, $\ell\in[q]$, multilinearity and the
payoff range $[0,1]$ give
\begin{align*}
 \nbr{\partial_{j_1}\cdots\partial_{j_q}\mathcal G_i(\bx)
 [\bz_{j_1},\ldots,\bz_{j_q}]}_\infty
 \leq2\prod_{\ell=1}^q\nbr{\bz_{j_\ell}}_1.
\end{align*}
Indeed, for every output coordinate, expanding $\cU_i$ over pure
profiles bounds the absolute value of the mixed directional derivative of
each term in
$\mathcal G_i(\bx)=\cU_i(\bx_{-i})
-\inner{\cU_i(\bx_{-i})}{\bx_i}\one$ by
$\prod_{\ell=1}^q\nbr{\bz_{j_\ell}}_1$.  This proves the first inequality in
the statement by taking $q=1$.

Fix $i\in[N]$ and an integer $1\leq h\leq N$.  Use
$(\bx_j)_{j\in[N]}$ with the extension from \Cref{sec:finite-algorithm}, and set
$\widetilde\bg_i^{(t)}=\mathcal G_i(\bx^{(t)})$ for every integer $t$.  Write
$k_{h,s}=\operatorname{ker}(\delta^h)_s$ for $s\geq0$.  Since
$\sum_{s\geq0}k_{h,s}=0$, for every integer $t$ and $s\geq0$ define the
Taylor remainder by
\begin{align*}
 \boldsymbol\chi_{i,t,s}^{\rm Tay}
 \coloneqq\mathcal G_i\rbr{\bx^{(t-s)}}-\mathcal G_i\rbr{\bx^{(t)}}
 -\sum_{j=1}^N \partial_j\mathcal G_i\rbr{\bx^{(t)}}
 \rbr{\bx_j^{(t-s)}-\bx_j^{(t)}}.
\end{align*}
Taylor's formula gives
\begin{align*}
 \rbr{\delta^h\widetilde\bg_i}^{(t)}
 =\sum_{j=1}^N \partial_j\mathcal G_i\rbr{\bx^{(t)}}
 \rbr{\delta^h\bx_j}^{(t)}
 +\sum_{s\geq0}k_{h,s}\boldsymbol\chi_{i,t,s}^{\rm Tay}.
\end{align*}
For every integer $t$, nonnegative integer $s$, distinct $j,k\in[N]$, and
$\alpha\in[0,1]$, the profile
$\bx^{(t)}+\alpha\rbr{\bx^{(t-s)}-\bx^{(t)}}$ belongs to
$\prod_{\ell=1}^N\Delta^{m_\ell}$.  The bound with $q=2$ gives
\begin{align*}
 &\nbr{\partial_j\partial_k\mathcal G_i\rbr{
 \bx^{(t)}+\alpha\rbr{\bx^{(t-s)}-\bx^{(t)}}}
 \sbr{\bx_j^{(t-s)}-\bx_j^{(t)},
 \bx_k^{(t-s)}-\bx_k^{(t)}}}_\infty\\
 \leq&2\nbr{\bx_j^{(t-s)}-\bx_j^{(t)}}_1
 \nbr{\bx_k^{(t-s)}-\bx_k^{(t)}}_1.
\end{align*}
Since $\partial_j^2\mathcal G_i=0$, the integral Taylor remainder and
$\int_0^1 2(1-\alpha)\ud\alpha=1$ yield
\begin{align*}
 \nbr{\boldsymbol\chi_{i,t,s}^{\rm Tay}}_\infty
 \leq\rbr{\sum_{j=1}^N
 \nbr{\bx_j^{(t-s)}-\bx_j^{(t)}}_1}^2.
\end{align*}
Since $\bg_i^{(s)}=\boldsymbol0$ for every integer $s\leq0$, for every
$t\geq1$,
\begin{align*}
 \rbr{\delta^h\bg_i}^{(t)}
 =\rbr{\delta^h\widetilde\bg_i}^{(t)}
 -\mathcal G_i\rbr{\bx^{(0)}}\sum_{s\geq t}k_{h,s}.
\end{align*}
Therefore, for every $t\geq1$, set
\begin{align*}
 \boldsymbol\chi_{i,h}^{(t)}
 \coloneqq\sum_{s\geq0}k_{h,s}\boldsymbol\chi_{i,t,s}^{\rm Tay}
 -\mathcal G_i\rbr{\bx^{(0)}}\sum_{s\geq t}k_{h,s}.
\end{align*}

Fix $T\geq1$.  Weighted Cauchy--Schwarz and H\"older's inequality give
\begin{align*}
 \sum_{t=1}^T\nbr{\sum_{s\geq0}
 k_{h,s}\boldsymbol\chi_{i,t,s}^{\rm Tay}}_\infty^2
 \leq N^3\rbr{\sum_{s\geq0}\abr{k_{h,s}}}
 \sum_{s\geq0}\abr{k_{h,s}}
 \sum_{j=1}^N\sum_{t=1}^T
 \nbr{\bx_j^{(t-s)}-\bx_j^{(t)}}_1^4.
\end{align*}
For every $j\in[N]$, integer $t$, and integer $s\geq1$, telescoping and
H\"older's inequality yield
\begin{align*}
 \nbr{\bx_j^{(t-s)}-\bx_j^{(t)}}_1^4
 \leq s^3\sum_{r=0}^{s-1}
 \nbr{\rbr{\mathsf D\bx_j}^{(t-r)}}_1^4.
\end{align*}
For every $j\in[N]$ and integer $s\geq1$, the extension of $\bx_j$ from
\Cref{sec:finite-algorithm} gives
\begin{align*}
 \sum_{t=1}^T\sum_{r=0}^{s-1}
 \nbr{\rbr{\mathsf D\bx_j}^{(t-r)}}_1^4
 \leq s\sum_{t=1}^T
 \nbr{\rbr{\mathsf D\bx_j}^{(t)}}_1^4.
\end{align*}
Substitution and \Cref{eq:filter-fourth-moment} give
\begin{align*}
 \sum_{t=1}^T\nbr{\sum_{s\geq0}
 k_{h,s}\boldsymbol\chi_{i,t,s}^{\rm Tay}}_\infty^2
 \leq2^{56}N^{15}
 \sum_{j=1}^N\sum_{t=1}^T
 \nbr{\rbr{\mathsf D\bx_j}^{(t)}}_1^4.
\end{align*}
By \Cref{eq:pointwise-action-movement,eq:movement-path-bound},
\begin{align*}
 \sum_{j=1}^N\sum_{t=1}^T
 \nbr{\rbr{\mathsf D\bx_j}^{(t)}}_1^4
 \leq&9\cdot2^{61}\gamma^2N^{12}
 \sum_{j=1}^N\sum_{t=1}^T
 \lambda_j^{(t)}\nbr{\rbr{\mathsf D\bx_j}^{(t)}}_1^2\\
 \leq&9\sqrt2\cdot2^{65}
 \gamma^2N^{12}\mathcal P_T^\Xi.
\end{align*}
The second inequality uses $\lambda_j^{(t)}\leq
\sqrt{\lambda_j^{(t)}}$, followed by
\Cref{eq:movement-path-bound}.  Substitution gives
\begin{align*}
 \sum_{t=1}^T\nbr{\sum_{s\geq0}
 k_{h,s}\boldsymbol\chi_{i,t,s}^{\rm Tay}}_\infty^2
 \leq9\sqrt2\cdot2^{121}
 \gamma^2N^{27}\mathcal P_T^\Xi.
\end{align*}
Since $\nbr{\mathcal G_i(\bx^{(0)})}_\infty\leq1$,
\Cref{eq:filter-startup-tail} gives
\begin{align*}
 \sum_{t=1}^T\nbr{\mathcal G_i\rbr{\bx^{(0)}}
 \sum_{s\geq t}k_{h,s}}_\infty^2
 \leq2^{59}N^{13}.
\end{align*}
By the definition of $\boldsymbol\chi_{i,h}^{(t)}$, for every $t\geq1$,
\begin{align*}
 \nbr{\boldsymbol\chi_{i,h}^{(t)}}_\infty^2
 \leq2\nbr{\sum_{s\geq0}
 k_{h,s}\boldsymbol\chi_{i,t,s}^{\rm Tay}}_\infty^2
 +2\nbr{\mathcal G_i\rbr{\bx^{(0)}}
 \sum_{s\geq t}k_{h,s}}_\infty^2.
\end{align*}
Summing this inequality and using the preceding two estimates proves
\Cref{eq:filtered-multiaffine-residual-sharp}.  Finally,
$\gamma\leq N^{-20}$ implies
\begin{align*}
 \gamma^2N^{27}\leq N^{-13}\leq N^{13},
\end{align*}
which proves \Cref{eq:filtered-multiaffine-residual}.
\end{proof}

\end{document}